\documentclass[conference]{IEEEtran}
\IEEEoverridecommandlockouts
\usepackage{cite}
\usepackage{amsmath,amssymb,amsfonts}
\usepackage{algorithm}
\usepackage{algorithmic}
\usepackage{graphicx}
\usepackage{textcomp}
\usepackage{xcolor}
\def\BibTeX{{\rm B\kern-.05em{\sc i\kern-.025em b}\kern-.08em
    T\kern-.1667em\lower.7ex\hbox{E}\kern-.125emX}}

\usepackage{amsmath}
\usepackage{amssymb}
\usepackage{mathtools}
\usepackage{amsthm}

\usepackage[T1]{fontenc}    
\usepackage{hyperref}       
\usepackage{url}            
\usepackage{booktabs}       
\usepackage{nicefrac}       
\usepackage{microtype}      
\usepackage{amsmath}
\usepackage{amsfonts}
\usepackage{amsthm}
\usepackage{dsfont}
\usepackage{array}   
\usepackage{enumitem}
\usepackage{adjustbox}
\usepackage{booktabs}
\usepackage[dvipsnames,table]{xcolor}
\usepackage[dvipsnames,table]{xcolor}
\usepackage{multirow}
\usepackage{tabularx}
\usepackage{natbib}
\usepackage[most]{tcolorbox}
\usepackage{fancyvrb}
\usepackage{fvextra}
\setcitestyle{numbers,open={[},close={]}} 
\definecolor{blue1}{HTML}{D5E1EF}
\definecolor{blue2}{HTML}{9DC2D5}
\definecolor{blue3}{HTML}{5A94B9}
\definecolor{blue4}{HTML}{335372}
\definecolor{ysdarkpurple}{HTML}{4E2399}
\definecolor{ysshallowpurple}{HTML}{E6DBFF}
\definecolor{ysdarkred}{HTML}{8c2824}
\definecolor{ysshallowred}{HTML}{F8D7D7}
\definecolor{ysdarkblue}{HTML}{005E99}
\definecolor{ysshallowblue}{HTML}{CCEBFF}
\definecolor{ysdarkgrey}{HTML}{333333}
\definecolor{ysshallowgrey}{HTML}{E5E5E5}
\definecolor{boxblue1}{HTML}{9DC2D5}
\definecolor{boxblue2}{HTML}{D5E1EF}
\usepackage{etoc}
\usepackage{subcaption}
\definecolor{red4}{HTML}{E25659}
\definecolor{bestbg}{HTML}{CDD8E1}    
\definecolor{secondbg}{HTML}{DFE6EC}  
\definecolor{thirdbg}{HTML}{EEF2F5}   
\definecolor{oursbg}{HTML}{F2F5F8}    

\theoremstyle{plain}
\newtheorem{theorem}{Theorem}[section]

\theoremstyle{definition}
\newtheorem{definition}[theorem]{Definition}

\theoremstyle{remark}

\usepackage{pifont}

\usepackage[capitalize,noabbrev]{cleveref}
    
\begin{document}

\title{GraspLLM: Towards Zero-Shot Generalization on Text-Attributed Graphs with LLMs
}

\author{
\IEEEauthorblockN{
Hengyi Feng\textsuperscript{1,2*},
Zeang Sheng\textsuperscript{2*},
Meiyi Qiang\textsuperscript{2},
Yang Li\textsuperscript{3},
Wentao Zhang\textsuperscript{2,4$\dagger$}%
\thanks{* Equal contribution.}%
\thanks{$\dagger$ Corresponding author.}%
}

\vspace{0.3em}

\IEEEauthorblockA{
\textsuperscript{1}University of Electronic Science and Technology of China \\
\textsuperscript{2}Peking University \quad
\textsuperscript{3}Tencent Inc \quad
\textsuperscript{4}Zhongguancun Academy \\
hengyi.feng@std.uestc.edu.cn, \{shengzeang18, wentao.zhang\}@pku.edu.cn
}
}

\maketitle

\begin{abstract}
    Research on Text-Attributed Graphs (TAGs) has gained significant attention recently due to its broad applications across various real-world data scenarios, such as citation networks, e-commerce platforms, social media, and web pages.
    Inspired by the remarkable semantic understanding ability of Large Language Models (LLMs), there have been numerous attempts to integrate LLMs into TAGs.
    However, existing methods still struggle to generalize across diverse graphs and tasks, and their ability to capture transferable graph structural patterns remains limited.
    To address this, we introduce the GraspLLM, a framework that combines \underline{Gra}ph structural comprehension with \underline{s}emantic understanding \underline{p}rowess of \underline{LLM}s to enhance the cross-dataset and cross-task generalizability. 
    Specifically, we represent node texts from different graphs in a unified semantic space with a frozen general embedding model, on top of which we perform motif-aware contrastive learning across multiple motif-induced adjacency matrices to extract dataset-agnostic structural information.
    Then, with our proposed \textit{optimal contextual subgraph}, we extract the most contextually relevant subgraph for each target node and align these subgraphs to the token space of LLM via an alignment projector.
    Extensive experiments on TAG benchmark datasets spanning diverse domains reveal that GraspLLM consistently outperforms previous LLM-based methods for TAGs, especially in zero-shot scenarios, highlighting its strong generalizability across different datasets and tasks.
    Our code is available at \href{https://github.com/Heinz217/GraspLLM}{\url{https://github.com/Heinz217/GraspLLM}}.
\end{abstract}

\begin{IEEEkeywords}
Text-Attributed Graph, Large Language Models, Graph Neural Networks, Zero Shot Learning
\end{IEEEkeywords}

\section{Introduction}

Graph data is ubiquitous in the real world, serving as a natural abstraction to represent complex relationships in various domains such as social networks~\cite{GraphAdapter, stock}, citation networks~\cite{OGB}, traffic networks~\cite{zhao2024stmgf}, and web page networks~\cite{Webkb}.
Beyond pure topology, many graphs are enriched with node-level textual information, forming Text-Attributed Graphs (TAGs)~\cite{textspace, survey2, survey3}.
For example, in e-commerce networks~\cite{CS-TAG}, nodes correspond to products, and their textual features are derived from user reviews. An edge between two products indicates that they are frequently co-purchased or co-viewed. 
Effectively analyzing such complex TAG data has therefore emerged as a pressing demand of graph data management pipelines.

In recent years, Large Language Models (LLMs) have achieved remarkable success across diverse domains and modalities~\cite{NLP1, NLP2, CV1, CV2, liang2025dataflow, bai2024survey},
demonstrating powerful generalization capabilities. 
This success has motivated researchers to explore the potential of adapting LLMs to graph machine learning on TAG data~\cite{GPT4Graph, Potential, noisy}. 
Current studies of LLMs on TAGs can be broadly categorized into two paradigms: \textbf{\textit{LLM as Enhancer}} and \textbf{\textit{LLM as Predictor}}~\cite{survey1}.

\textbf{\textit{1) LLM as Enhancer:}} This approach uses LLMs to augment textual information or refine feature representations for the input graphs before feeding them into Graph Neural Networks (GNNs)~\cite{opengraph, grenade, LLM-BP}, aiming to build more generalizable graph models~\cite{LLM-GNN, WalkLM, UniGLM, zhao2026dynamic}. 
However, the inherent limitations of GNNs persist: they remain constrained by the need for extensive task-specific training and show limited generalizability.
\textbf{\textit{2) LLM as Predictor:}} This approach converts graph data into LLM-friendly tokens or sequences~\cite{ Experts, TEA-GLM, GOFA} for LLMs to understand.
Yet, this approach also struggles to generalize across different graphs for its weak comprehension on graph structures.
Some methods~\cite{GraphAdapter, GraphPrompter} rely on single-node representations, which hinders LLMs from capturing structural knowledge. Other methods based on neighbor structure encoding~\cite{GraphGPT, LLaGA} focus on local structure, but lack a broader view of the graph. 

As summarized, the primary challenge of existing approaches is the \textbf{\textit{limited generalizability}}, which can be further decomposed into two key aspects:
\begin{itemize}
    \item \textbf{\textit{C1: Limited generalizability on features:}} 
    Significant distribution shifts in node features across different TAG datasets have been widely observed, which hinder both in-domain and cross-domain generalizabilities of models~\cite{Unigraph}.
    For example, although TEA-GLM~\cite{TEA-GLM} achieves competitive performance under in-domain zero-shot settings, 
    it suffers a relative performance drop of over 70\% (e.g., from 0.548 to 0.110) on several datasets under cross-domain settings.
    This highlights the necessity of developing approaches that can generalize well across diverse feature distributions of graph data.
    \item \textbf{\textit{C2: Limited generalizability on graph structures:}} 
    Real-world TAGs exhibit substantial structural diversity across datasets, 
    challenging models in transferring learned structural knowledge across datasets and tasks.
    A recent study~\cite{GOFA} highlights that current LLM-based approaches generally lack a deep comprehension of graph structures. 
    For instance, even for datasets from the same domain, their topological patterns such as clustering tendencies and connection densities vary significantly. Consequently, a model trained on one dataset frequently struggles to generalize even to other in-domain datasets.
\end{itemize}

To address these challenges, we propose \textbf{GraspLLM}, a novel framework that integrates \underline{Gra}ph structural comprehension with \underline{s}emantic understanding \underline{p}rowess of \underline{LLM}s to enhance the cross-dataset and cross-task generalizability.
The key idea lies in distilling node feature semantics and topological dependencies into compact subgraph representations, making node features and graph structures interpretable to LLMs.
Specifically, GraspLLM incorporates a carefully designed motif-aware self-supervised learning paradigm for GNN, built upon node features produced by a unified large embedding model that yields a domain-invariant feature space across datasets. 
The learned graph representations serve as structural priors, enabling GNNs to act as domain-agnostic structure extractors (\textbf{\textit{addressing C1}}).
Guided by GNN-derived structural insights, GraspLLM constructs \textbf{\textit{optimal contextual subgraphs}} via a greedy sampling algorithm, which prioritizes semantic and structural relevance, providing LLMs with richer and more informative subgraph representations. 
An alignment projector then transforms these subgraphs into the token space of LLMs (\textbf{\textit{addressing C2}}).
These designs enable LLMs to internalize and reason over graph structure, enhancing generalization. 
Extensive experiments demonstrate that GraspLLM excels in zero-shot scenarios, showcasing state-of-the-art cross-dataset and cross-task generalization capabilities.
Furthermore, in supervised settings, GraspLLM maintains competitive performance.

The main contributions and benefits of our work can be summarized as follows:
\begin{itemize}
    \item \textbf{\textit{Framework:}} 
    We propose GraspLLM, a novel framework that enables robust in-domain, cross-domain, and cross-task zero-shot generalization in TAGs with LLMs, effectively addressing key limitations in existing approaches.
    \item \textbf{\textit{Algorithms:}} 
    Leveraging the GNN-derived node representations, we design a subgraph sampling algorithm that provides LLMs with contextually relevant information, thereby enhancing their graph reasoning capabilities.
    \item \textbf{\textit{Experimental findings:}} 
    Evaluated on fourteen real-world TAG benchmark datasets across diverse domains (academic, e-commerce, social, etc.) and instantiated on diverse representative LLM backbones, GraspLLM consistently achieves state-of-the-art zero-shot performance, while maintaining competitive supervised performance. 
\end{itemize}

\section{Related Work}

\subsection{Self-Supervised Learning for Graph Neural Networks}
Graph Neural Networks (GNNs)~\cite{GCN, GAT, SAGE} have emerged as a powerful framework to efficiently process and analyze graph data. 
By harnessing the message-passing mechanism, GNNs effectively propagate and aggregate information across graph structures.
However, GNNs possess a poor task generalization capability~\cite{gnngeneral}.
To address this, self-supervised learning (SSL) is introduced to learn task-agnostic representations~\cite{GraphContrast, SimGRACE}.
Specifically, graph contrastive learning (GCL) has garnered particular attention
~\cite{gcc, GraphCL, zeng2025isgcl}
, focusing on learning representations by contrasting positive and negative samples. 
Notable methods in this domain are based on mutual information maximization~\cite{DGI, GRACE}, as well as those that employ whitening and decorrelation strategies~\cite{CCA-SSG}. 
However, their downstream performance typically relies on task-specific heads, limiting generalization across tasks and domains~\cite{TEA-GLM}.

\subsection{Large Language Models for Text-Attributed Graphs}

Adapting LLMs to graph machine learning has recently attracted growing interest, ranging from interactive reasoning over knowledge graphs~\cite{sun2024think, chen2024plan, wang2025agrag, afandi2026llm} to representation learning on TAGs. 
Existing efforts on TAGs can be broadly grouped into two main paradigms: \textbf{\textit{LLM as Enhancer}} and \textbf{\textit{LLM as Predictor}}.

\textbf{\textit{LLM as Enhancer}} approaches leverage LLMs to augment textual information or refine feature representations prior to GNN processing~\cite{WalkLM, simteg, ENGINE}.
TAPE~\cite{TAPE} utilizes the GPT model~\cite{GPT3} to generate predictions, which are then distilled into simplified insights to assist smaller models in decision making. 
OFA~\cite{OFA} and ZeroG~\cite{ZeroG} utilize LLMs to encode node text into the embedding space to enrich graph representations. 
GraphCLIP~\cite{GraphCLIP} generates natural language summaries of subgraphs to pretrain the model. 
LLM‑BP~\cite{LLM-BP} uses LLMs to produce task‑adaptive node embeddings (LLM2Vec~\cite{behnamghader2024llm2vec}) and estimate graph homophily for belief‑propagation‑based zero‑shot inference (GPT4o~\cite{hurst2024gpt4o}).
However, these approaches still inherit the limitations of GNNs.

\textbf{\textit{LLM as Predictor}} approaches convert graph data into LLM-interpretable tokens or sequences~\cite{graphprompt, GraphAdapter, GraphPrompter, graphagent}. 
For example, 
InstructGLM~\cite{InstructGLM} uses natural language to describe graph structures. 
GraphGPT~\cite{GraphGPT} and LLaGA~\cite{LLaGA} convert GNN-processed graphs into token sequences and map them to the LLM embedding space. 
TEA-GLM~\cite{TEA-GLM} and GOFA~\cite{GOFA} focus on node representations, aligning GNN output to the language space. However, these methods often struggle to capture structural dependencies, limiting the capacity of LLMs to reason over graph data~\cite{canlarge}. 
Moreover, their ability to generalize across tasks and domains remains insufficient.

\begin{figure*}[t]
    \centering
    \includegraphics[width=1.0\textwidth]{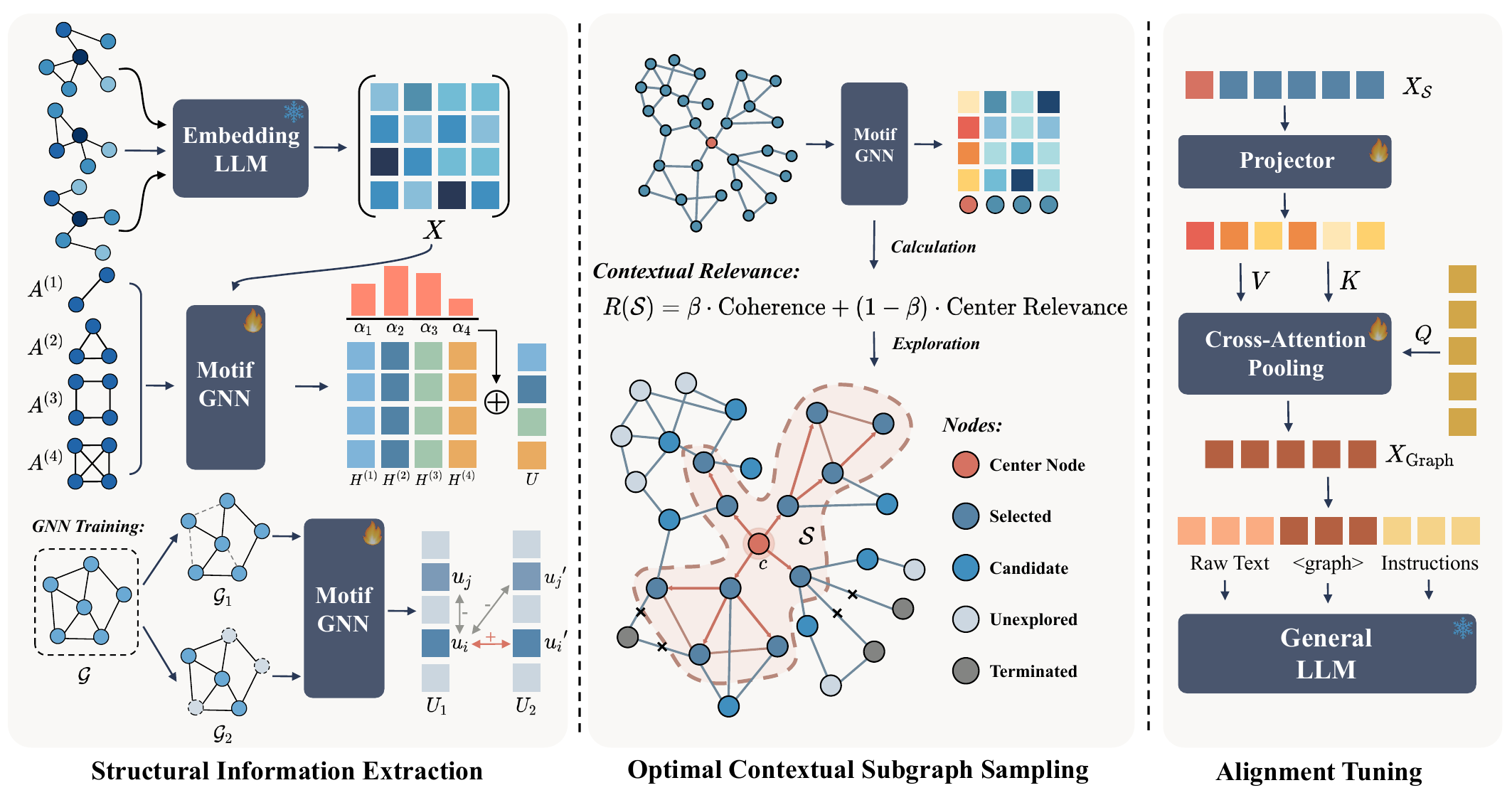}
    \caption{Overview of the GraspLLM framework. The full process is comprised of three stages: \textit{Structural Information Extraction} via a motif-aware GNN on top of a frozen embedding LLM, \textit{Optimal Contextual Subgraph Sampling} guided by contextual relevance, and \textit{Alignment Tuning} that projects the sampled subgraph into the token space of a frozen LLM.}
    \label{fig:overview}
\end{figure*}

\section{Methodology}
\label{sect:method}

\subsection{Overview}

In this section, we present the GraspLLM framework, which enables Large Language Models (LLMs) to effectively reason over graph-structured data with improved zero-shot generalization across datasets and tasks. 
As illustrated in Fig.~\ref{fig:overview}, our core idea is to distill topological dependencies and node-level semantics into compact subgraph representations. 
These representations are then projected into the token embedding space of LLMs. 

\subsection{Notations}
In this work, we focus on Text-Attributed Graphs (TAGs), where each node is associated with raw textual content, providing rich semantic information beyond structural connectivity.
Formally, a graph is defined as \(\mathcal{G}=(\mathcal{V}, \mathcal{E}, \mathbf{A}, \text{C})\), where \(\mathcal{V}=\{v_1,v_2,\dots,v_{|\mathcal{V}|}\}\) is the set of nodes, 
and \(\mathcal{E}=\{e_1,e_2,\dots,e_{|\mathcal{E}|}\}\) is the set of edges. The adjacency matrix is defined as \(\mathbf{A}\in \mathbb{R}^{N\times N}\), with \(\mathbf{A}_{ij}=1\) iff \((v_i,v_j)\in \mathcal{E}\). The textual content \(\text{C}={\{c_n\}}_{n=1}^{N}\) \((c_n\in \mathcal{T}^{{L}_n})\) represents the raw text for node \(n \in [1, 2, \ldots, N]\), where \(\mathcal{T}\) is the token dictionary, and \(L_n\) is the sequence length.

\subsection{Structural Information Extraction}
\label{sect:structural information}

GNNs provide a powerful framework for modeling graph structural dependencies. However, their reliance on domain-specific training limits their adaptability to real-world TAGs, which exhibit varying feature distributions. To address this, our aim is to develop a GNN that serves as a structural information encoder for LLMs, extracting graph insights while remaining adaptable across domains. 

\subsubsection{Unified Semantic Encoding}
\label{sect:unified-encoding}

A fundamental obstacle to cross-domain generalization is that node features from different graphs originate in distinct domains, leading to substantial distribution shifts that pure GNNs cannot reconcile. Here, we adopt Qwen3-Embedding-8B~\cite{zhang2025qwen3emb} as a single text encoder for datasets across all domains.

Formally, given a graph \(\mathcal{G}\) whose nodes are associated with raw text \(\mathrm{C}\), we obtain node-level feature representations as
\begin{equation}
\label{eq:qwen3-embed}
    \mathbf{X} = f_{\mathrm{Qwen3\text{-}Emb}}(\mathrm{C}) \in \mathbb{R}^{N\times d},
\end{equation}
where \(f_{\mathrm{Qwen3\text{-}Emb}}\) is kept frozen throughout the whole process. Since Qwen3-Embedding-8B is pretrained on a massive heterogeneous corpus covering web text, code, scientific literature and social content, the resulting embeddings already lie in a single semantic space that is empirically domain-invariant across academic, social, e-commerce and web-page graphs. 
This makes the following process genuinely dataset-agnostic.

\begin{figure}[h]
    \centering
    \resizebox{0.85\linewidth}{!}{
    \includegraphics{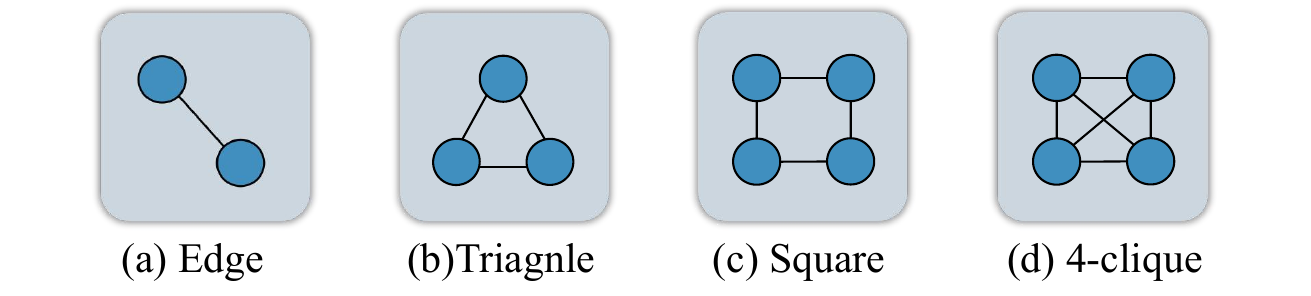}}
    \caption{Examples of the motifs.}
    \label{fig:motifs}
\end{figure}

\subsubsection{Motif-Aware Graph Self-Supervised Learning}
\label{sect:motif-aware}

Motif-defined neighborhoods, based on recurring subgraph patterns, capture essential structural information around nodes, offering a richer perspective than edge-defined neighborhoods. By focusing on stronger bonds, they help the models identify critical connections among nodes. Considering this property, we introduce a motif-aware graph self-supervised learning paradigm. 

We construct motif-based adjacency matrices \(\{\mathbf{A}^{(i)}\}_{i=1}^{M}\) for four graph motifs: edge, triangle, 4-cycle, and 4-clique (Fig.~\ref{fig:motifs}), reflecting structural relations from weak to strong. 
The GNN integrates these matrices through motif-specific message passing channels: In every message-passing step, for each motif channel \(i\), the compute process is as follows:
\begin{equation}
    \mathbf{H}_i^{(l)}=\sigma \left(\tilde{\mathbf{A}}^{(i)}\mathbf{H}^{(l-1)}\mathbf{W}_i^{(l)} \right),
\end{equation}
where \(\tilde{\mathbf{A}}^{(i)}=\mathbf{A}^{(i)}+\mathbf{I}\) is the self-loop augmented adjacency matrix, \(\mathbf{W}_{i}^{(l)}\) is the learnable transformation matrix, and \(\sigma(\cdot)\) denotes a nonlinear activation function.

The final node representation \(\mathbf{U}\) is computed as a weighted sum of the final-layer outputs \(\mathbf{H}_{i}^{(L)}\) from all \(M\) motif channels, with learnable weights \(\{\alpha_i\}_{i=1}^{M}\) controlling each motif's contribution: 
\begin{equation}
    \mathbf{U}=\sum_{i=1}^{M}\alpha_{i}\mathbf{H}_{i}^{(L)}.
\end{equation}

Our goal is to train a dataset-agnostic and task-agnostic GNN that captures generalizable structural patterns from graphs, rather than overfitting to a specific domain or task. 
To achieve this, we adopt an contrastive method, following~\cite{GRACE}, generating two augmented graph views \(\mathcal{G}_1\) and \(\mathcal{G}_2\). 
Given a node \(\boldsymbol{v_i}\), \(\boldsymbol{u_i} \in \mathcal{G}_1\) serves as the anchor, while \(\boldsymbol{u_i}^\prime \in \mathcal{G}_2\) serves as the positive sample. Embeddings of other nodes from either view are treated as intra-view and inter-view negatives. The contrastive objective is defined as:
\begin{equation}
\begin{aligned}
    \ell(\boldsymbol{u}_i, \boldsymbol{u}_i^\prime) = \log \frac{e^{\theta(\boldsymbol{u}_i, \boldsymbol{u}_i^\prime) / \tau}}{
    \begin{aligned}
        &e^{\theta(\boldsymbol{u}_i, \boldsymbol{u}_i^\prime) / \tau} + \sum_{j \neq i}^N e^{\theta(\boldsymbol{u}_i, \boldsymbol{u}_j) / \tau} \\
        &+ \sum_{j \neq i}^N e^{\theta(\boldsymbol{u}_i, \boldsymbol{u}_j^\prime) / \tau},
    \end{aligned}
    }
\end{aligned}
\end{equation}
where \(\mathds{1}_{[j \neq i]}\in \{ 0, 1\}\) is an indicator function that equals to 1 iff \(j \neq i\), \(\tau\) is a temperature parameter, and \(\theta(\cdot , \cdot)\) is the cosine similarity. The overall objective averages the loss in both directions:
\begin{equation}
    \mathcal{J} = \frac{1}{2N} \sum_{i=1}^{N} \left[ \ell\left(\boldsymbol{u_i}, \boldsymbol{u_i}^\prime\right) + \ell\left(\boldsymbol{u_i}^\prime, \boldsymbol{u_i}\right) \right].
\end{equation}
The GNN is trained end-to-end on top of the frozen Qwen3-Embedding-8B features within this contrastive framework.

\subsection{Optimal Contextual Subgraph Sampling}
\label{sect:subgraph sampling}

In TAGs, the properties of a node can have strong connections with its local and global structure. It is crucial to provide more comprehensive and effective subgraph information for LLMs to enable better understanding of the graph data. Therefore, a key challenge arises:
\textbf{\textit{How to perform subgraph sampling to effectively preserve essential contextual and structural information for LLMs?}} 
To address this, we propose the concept of \textit{Contextual Relevance}.

\begin{definition}[\textbf{Contextual Relevance}]
\label{definition:contextual relevance}
\textit{Let \(\boldsymbol{u_n}\in \mathbb{R}^{d}\) denote the embedding vector of node \(n\). The contextual relevance \(R(\mathcal{S})\) for the center node \(c\) and a set of nodes \(\mathcal{S}\in \mathcal{V}\) is defined as:}
\begin{multline}
    R(\mathcal{S}) = \beta \cdot \underbrace{\sum_{n \in \mathcal{S}} \sum_{m \in \mathcal{N}(n)} \max\left(0, \cos(\boldsymbol{u_n}, \boldsymbol{u_m}) \right)}_{\text{Structural Coherence}} + \\
    (1-\beta) \cdot \underbrace{\sum_{n \in \mathcal{S}} \max\left(0, \cos(\boldsymbol{u_n}, \boldsymbol{u_c}) \right)}_{\text{Center Relevance}},
\end{multline}
\textit{where \(\beta \in [ 0,\ 1]\), controlling the trade-off between coherence and relevance, \(\cos(\cdot, \cdot)\) is the cosine similarity, and \(\mathcal{N}(n)\) is the neighborhood of \(n\) in \(\mathcal{G}\).}
\end{definition}

The function \(R(\mathcal{S})\) captures both structural and semantic relevance, balancing local coherence and proximity to the central node. Based on this, we define the \textbf{\textit{Optimal Contextual Subgraph}} \(\mathcal{G'}\subseteq \mathcal{G}\) for a given center node \(c\).

\begin{definition}[\textbf{Optimal Contextual Subgraph}]
\label{definition:subgraph}
\textit{Optimal Contextual Subgraph \(\mathcal{G'}\) is defined as the set of nodes that maximizes contextual relevance \(R(\mathcal{S})\) for a center node \(c\):}
\begin{equation}
   \mathcal{G'}=\arg\max_{\mathcal{S}\in \mathcal{V}}R(\mathcal{S}) \ \ \ s.t. |\mathcal{S}| \leq L, \ d(v, \ c)\leq T, \ \forall \ v\in \mathcal{S}, 
\end{equation}
\textit{where \(d(v, c)\) represents the shortest path distance constraint, \(T\) is the maximum depth of exploration, and \(L\) is the upper limit on the number of selected nodes.}
\end{definition}

To effectively construct such a subgraph \(\mathcal{G'}\), we propose a greedy node selection algorithm, guided by GNN-derived node embeddings. 
The core idea of the algorithm is to perform relevance-aware sampling in a depth-first manner, treating each neighbor of the center node as the starting node. 
At each step, the node \(v\) yielding the highest marginal gain \(\Delta R(v \mid \mathcal{S})\) is selected for inclusion in \(\mathcal{S}\). 
A threshold \(\tau\) dynamically halts the action to prevent redundant expansion:
\begin{equation}
\label{eq:eta}
    \eta(v \mid \mathcal{S}) = \frac{\Delta R(v \mid \mathcal{S})}{\sum_{u \in \mathcal{N}(\mathcal{S}) \setminus \mathcal{S}} \Delta R(u \mid \mathcal{S})}.
\end{equation}
If \(\eta(v\mid \mathcal{S})<\tau\) for all candidate nodes \(v\), the search terminates early. The full procedure is summarized in Algorithm~\ref{algorithm:graph}. To further analyze the effectiveness of the proposed algorithm, we first consider an ideal greedy selection over the feasible candidate set.

\begin{algorithm}[t]
\caption{Greedy Node Selection}
\label{algorithm:graph}
\begin{algorithmic}[1]
\REQUIRE node embeddings $\{\boldsymbol{u}_n\}_{n\in\mathcal{V}}$, center node $c$, threshold $\tau$, max depth $T$, capacity $L$
\ENSURE contextual subgraph node set $\mathcal{S}$
\STATE $\mathcal{S}\gets\{c\}\cup\mathcal{N}(c)$;\ \ $\mathcal{V}_{\text{vis}}\gets\mathcal{S}$
\FOR{each $n\in\mathcal{N}(c)$}
    \STATE $v \gets n$
    \FOR{$t=1,\dots,T$}
        \STATE $\mathcal{N}'\gets\mathcal{N}(v)\setminus\mathcal{V}_{\text{vis}}$
        \IF{$\mathcal{N}'=\varnothing$ \textbf{ or } $|\mathcal{S}|\geq L$}
            \STATE \textbf{break}
        \ENDIF
        \STATE compute $\eta(u\mid\mathcal{S})$ for each $u\in\mathcal{N}'$ via \eqref{eq:eta}
        \STATE $\mathcal{N}^{\star}\gets\{u\in\mathcal{N}':\eta(u\mid\mathcal{S})>\tau\}$
        \IF{$\mathcal{N}^{\star}=\varnothing$}
            \STATE \textbf{break}
        \ENDIF
        \STATE $v\gets\arg\max_{u\in\mathcal{N}^{\star}}\eta(u\mid\mathcal{S})$
        \STATE $\mathcal{S}\gets\mathcal{S}\cup\{v\}$;\ \ $\mathcal{V}_{\text{vis}}\gets\mathcal{V}_{\text{vis}}\cup\{v\}$
    \ENDFOR
\ENDFOR
\RETURN $\mathcal{S}$
\end{algorithmic}
\end{algorithm}

\begin{theorem}[\textbf{Greedy Principle}]
\label{theorem:approximation}
\textit{The greedily selected subgraph \(\mathcal{S}\) achieves at least a \(1-\frac{1}{e}\) approximation to the optimal contextual subgraph \(\mathcal{G'}\) with respect to the objective of maximizing contextual relevance.} 
\end{theorem}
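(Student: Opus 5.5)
The plan is to reduce the statement to the classical Nemhauser--Wolsey--Fisher guarantee for greedy maximization of a monotone submodular function under a cardinality constraint. As the text preceding the theorem indicates, we analyze an \emph{idealized} greedy procedure. This procedure restricts the ground set to the feasible candidates $\mathcal{V}_T(c)=\{v\in\mathcal{V}: d(v,c)\le T\}$ and, at each of $L$ steps, adds the node of largest marginal gain $\Delta R(v\mid\mathcal{S})=R(\mathcal{S}\cup\{v\})-R(\mathcal{S})$. The depth constraint then just shrinks the ground set, and only $|\mathcal{S}|\le L$ remains, which is a uniform matroid.

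\textbf{Step 1 (structure of $R$).} Write
\[
R(\mathcal{S})=\sum_{n\in\mathcal{S}} w_n,\qquad w_n=\beta\sum_{m\in\mathcal{N}(n)}\max(0,\cos(\boldsymbol{u_n},\boldsymbol{u_m}))+(1-\beta)\max(0,\cos(\boldsymbol{u_n},\boldsymbol{u_c})).
\]
Both terms in Definition~\ref{definition:contextual relevance} sum over $n\in\mathcal{S}$, and the inner sum ranges over the full neighborhood $\mathcal{N}(n)$ in $\mathcal{G}$, not over $\mathcal{S}$. Hence $R$ is \emph{modular}, with nonnegative weights $w_n\ge 0$ because of the $\max(0,\cdot)$ clipping and $\beta\in[0,1]$. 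It follows immediately that $R(\varnothing)=0$, that $R$ is monotone, and that $R$ is submodular (with equality, since $\Delta R(v\mid\mathcal{S})=w_v$ for $v\notin\mathcal{S}$).

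\textbf{Step 2 (approximation).} Invoke the Nemhauser--Wolsey--Fisher theorem to get $R(\mathcal{S}_{\text{greedy}})\ge(1-1/e)R(\mathcal{G}')$. For completeness, the standard argument goes as follows. Let $\mathcal{S}_k$ be the greedy set after $k$ steps. Submodularity and monotonicity give
\[
R(\mathcal{G}')-R(\mathcal{S}_k)\le\sum_{v\in\mathcal{G}'}\Delta R(v\mid\mathcal{S}_k)\le L\,\bigl(R(\mathcal{S}_{k+1})-R(\mathcal{S}_k)\bigr),
\]
so the gap shrinks by a factor $(1-1/L)$ per step. After $L$ steps the gap is at most $(1-1/L)^L R(\mathcal{G}')\le e^{-1}R(\mathcal{G}')$. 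In the modular case greedy simply picks the $L$ largest weights, so it is in fact exact, and the bound holds a fortiori.

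\textbf{Main obstacle.} The hard part is connecting this idealized greedy to Algorithm~\ref{algorithm:graph}, which differs from it in three ways. First, it seeds $\mathcal{S}$ with $\{c\}\cup\mathcal{N}(c)$. Second, it restricts candidates to unvisited neighbors of the current walk node, rather than to all of $\mathcal{V}_T(c)$. Third, it stops early via the threshold $\tau$. None of these is covered by the cardinality-constrained theorem. I would therefore state the theorem explicitly for the ideal greedy over the feasible set, exactly as the preceding sentence frames it. I would then treat the algorithm as an efficient approximation of that greedy. Where a rigorous statement is needed, I would try a connectivity-constrained variant. The early-stopping threshold could be handled by an additive loss: when the algorithm stops, each remaining candidate has gain below $\tau$ times the total remaining gain. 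If the walk constraint turns out to break the guarantee, the claim should be explicitly restricted to the idealized selection.
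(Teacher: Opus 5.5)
Your proposal matches the paper's proof: both observe that $\Delta R(v\mid\mathcal{S})$ is a nonnegative quantity independent of $\mathcal{S}$, so $R$ is modular and therefore monotone and submodular, and both then invoke the Nemhauser--Wolsey--Fisher bound under $|\mathcal{S}|\le L$ for the idealized greedy over the feasible set. Your additions are accurate refinements of that same argument rather than a different route: you fold the depth constraint into the ground set, you note that greedy is in fact exact for a nonnegative modular objective, and you point out that Algorithm~\ref{algorithm:graph} itself (seeding with $\{c\}\cup\mathcal{N}(c)$, walk-restricted candidates, $\tau$-stopping) is not covered by the guarantee.
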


\begin{proof}[Proof sketch]
It suffices to show that $R$ is monotone and submodular; the $(1-1/e)$ bound then follows from~\cite{Nemhauser}. For any $\mathcal{S}\subseteq\mathcal{V}$ and $v\notin\mathcal{S}$, the marginal gain admits the closed form:
\begin{multline}
\label{eq:marginal-gain}
\Delta R(v\mid\mathcal{S}) = \beta\!\sum_{m\in\mathcal{N}(v)}\!\max\!\big(0,\cos(\boldsymbol{u_v},\boldsymbol{u_m})\big) + \\
(1\!-\!\beta)\max\!\big(0,\cos(\boldsymbol{u_v},\boldsymbol{u_c})\big),
\end{multline}
which is independent of $\mathcal{S}$ and nonnegative. Independence yields modularity, so $\Delta R(v\mid\mathcal{S})=\Delta R(v\mid\mathcal{S}')$ for all $\mathcal{S}\subseteq\mathcal{S}'$, which trivially satisfies submodularity. Nonnegativity yields monotonicity, $R(\mathcal{S}\cup\{v\})\geq R(\mathcal{S})$. Applying~\cite{Nemhauser} under the cardinality constraint $|\mathcal{S}|\leq L$ then gives the desired bound
\begin{equation}
\label{eq:greedy-bound}
    R(\mathcal{S}) \geq \left(1-\tfrac{1}{e}\right) R(\mathcal{G}').
\end{equation}
\end{proof}

Algorithm~\ref{algorithm:graph} follows this greedy principle with a depth-first restricted search to reduce per-step enumeration cost.

Beyond approximation quality, we further justify that feeding the sampled subgraph to the LLM is information-theoretically beneficial, provided that $\mathcal{S}$ faithfully approximates $\mathcal{G}'$ (\textit{Fidelity}) and $\mathcal{G}'$ carries information complementary to the raw text $\mathrm{X}$ (\textit{Non-redundancy}).

\begin{theorem}[\textbf{Informativeness}]
\label{theorem:informativeness}
\textit{Let $y$ be the prediction target, $\mathrm{X}$ the raw node text, and $\mathcal{G}'$ the optimal contextual subgraph. If $H(\mathcal{G}'\mid\mathcal{S})\leq\epsilon$ (\textit{Fidelity}) and $H(y\mid\mathrm{X},\mathcal{G}')=H(y\mid\mathrm{X})-\epsilon'$ with $\epsilon'>\epsilon$ (\textit{Non-redundancy}), then incorporating $\mathcal{S}$ strictly reduces predictive uncertainty:}
\begin{equation}
    H(y\mid\mathrm{X},\mathcal{S}) < H(y\mid\mathrm{X}).
\end{equation}
\end{theorem}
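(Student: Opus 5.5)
The plan is to compare $H(y\mid\mathrm{X},\mathcal{S})$ with $H(y\mid\mathrm{X},\mathcal{S},\mathcal{G}')$ using only standard entropy identities, and then feed in the two hypotheses. The key chain is
\begin{equation*}
H(y\mid\mathrm{X},\mathcal{S}) \le H(y\mid\mathrm{X},\mathcal{G}') + \text{(fidelity slack)} = H(y\mid\mathrm{X}) - \epsilon' + \text{(fidelity slack)},
\end{equation*}
so the whole argument reduces to showing that the fidelity slack is at most $\epsilon$. Then $\epsilon'>\epsilon$ gives strict inequality.

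\textbf{Step 1: bound the slack.} I would start from the chain rule for the joint entropy of $(y,\mathcal{G}')$ given $(\mathrm{X},\mathcal{S})$:
\begin{equation*}
H(y\mid\mathrm{X},\mathcal{S}) \le H(y,\mathcal{G}'\mid\mathrm{X},\mathcal{S}) = H(\mathcal{G}'\mid\mathrm{X},\mathcal{S}) + H(y\mid\mathrm{X},\mathcal{S},\mathcal{G}').
\end{equation*}
Conditioning reduces entropy, so the first term satisfies $H(\mathcal{G}'\mid\mathrm{X},\mathcal{S})\le H(\mathcal{G}'\mid\mathcal{S})\le\epsilon$ by Fidelity. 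By the same principle, the second term satisfies $H(y\mid\mathrm{X},\mathcal{S},\mathcal{G}')\le H(y\mid\mathrm{X},\mathcal{G}')$.

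\textbf{Step 2: substitute Non-redundancy.} Putting Step 1 together and using $H(y\mid\mathrm{X},\mathcal{G}')=H(y\mid\mathrm{X})-\epsilon'$ yields
\begin{equation*}
H(y\mid\mathrm{X},\mathcal{S}) \le \epsilon + H(y\mid\mathrm{X}) - \epsilon' < H(y\mid\mathrm{X}),
\end{equation*}
because $\epsilon'>\epsilon$.

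\textbf{Main obstacle: modeling, not the inequalities.} The computations above are routine. The delicate point is justifying that $y$, $\mathrm{X}$, $\mathcal{S}$ and $\mathcal{G}'$ can be treated as jointly distributed random variables with finite (discrete) entropies, so that the chain rule and monotonicity under conditioning apply. I would state this explicitly, taking the center node and graph as drawn from a data distribution, with $\mathcal{S}$ and $\mathcal{G}'$ deterministic functions of that draw (the output of Algorithm~\ref{algorithm:graph} and the $\arg\max$ in the definition of $\mathcal{G}'$, respectively).

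Because these are functions of a discrete draw, their entropies are finite and nonnegative, which the argument needs. This also avoids the pitfalls of differential entropy, where conditioning-reduces-entropy still holds but nonnegativity fails. No result beyond Theorem~\ref{theorem:approximation}'s setting is required; the approximation guarantee only motivates why Fidelity is plausible.
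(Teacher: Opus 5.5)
Your proof is correct and takes essentially the same route as the paper. Your bound $H(y\mid\mathrm{X},\mathcal{S})\le H(\mathcal{G}'\mid\mathrm{X},\mathcal{S})+H(y\mid\mathrm{X},\mathcal{S},\mathcal{G}')$ is the paper's chain-rule step $H(y\mid\mathrm{X},\mathcal{S})=H(y\mid\mathrm{X},\mathcal{G}',\mathcal{S})+I(y;\mathcal{G}'\mid\mathrm{X},\mathcal{S})$ combined with $I(y;\mathcal{G}'\mid\mathrm{X},\mathcal{S})\le H(\mathcal{G}'\mid\mathrm{X},\mathcal{S})$, just written through joint entropy; from there both arguments apply ``conditioning reduces entropy'' twice and substitute Fidelity and Non-redundancy. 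Your remark that all quantities should be discrete random variables with finite, nonnegative entropies is a useful clarification that the paper leaves implicit.
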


\begin{proof}[Proof sketch]
Following the entropy-based analysis of~\cite{TAPE}, we introduce $\mathcal{G}'$ as an auxiliary variable and decompose $H(y\mid\mathrm{X},\mathcal{S})$ via the chain rule of entropy:
\begin{equation}
\label{eq:chain}
H(y \mid \mathrm{X},\mathcal{S}) = H(y \mid \mathrm{X},\mathcal{G}',\mathcal{S}) + I(y;\mathcal{G}' \mid \mathrm{X},\mathcal{S}).
\end{equation}
For the mutual-information term, by the non-negativity of conditional entropy,
\begin{equation}
\label{eq:mi-bound}
I(y;\mathcal{G}'\mid\mathrm{X},\mathcal{S}) = H(\mathcal{G}'\mid\mathrm{X},\mathcal{S}) - H(\mathcal{G}'\mid y,\mathrm{X},\mathcal{S}) \leq H(\mathcal{G}'\mid\mathrm{X},\mathcal{S}).
\end{equation}
Applying ``conditioning reduces entropy'' to the two terms on the right-hand side of (\ref{eq:chain})--(\ref{eq:mi-bound}) yields
\begin{equation}
\label{eq:cond-reduce}
H(y\mid\mathrm{X},\mathcal{G}',\mathcal{S}) \leq H(y\mid\mathrm{X},\mathcal{G}'),\quad
H(\mathcal{G}'\mid\mathrm{X},\mathcal{S}) \leq H(\mathcal{G}'\mid\mathcal{S}),
\end{equation}
so that combining (\ref{eq:chain})--(\ref{eq:cond-reduce}) gives the key bound
\begin{equation}
\label{eq:key-bound}
H(y\mid\mathrm{X},\mathcal{S}) \leq H(y\mid\mathrm{X},\mathcal{G}') + H(\mathcal{G}'\mid\mathcal{S}).
\end{equation}
Finally, invoking \textit{Non-redundancy} on the first term of (\ref{eq:key-bound}) and \textit{Fidelity} on the second,
\begin{equation}
H(y\mid\mathrm{X},\mathcal{S}) \leq \big[H(y\mid\mathrm{X}) - \epsilon'\big] + \epsilon = H(y\mid\mathrm{X}) - (\epsilon' - \epsilon),
\end{equation}
which is strictly less than $H(y\mid\mathrm{X})$ by the assumption $\epsilon'>\epsilon$, completing the proof.
\end{proof}

Taken together, the two theorems offer end-to-end justification for our subgraph sampler: Theorem~\ref{theorem:approximation} guarantees that the greedy procedure produces a subgraph that is provably close to the optimal contextual subgraph $\mathcal{G}'$ in terms of contextual relevance, while Theorem~\ref{theorem:informativeness} ensures that any subgraph sufficiently close to $\mathcal{G}'$ injects task-relevant information not contained in the raw node text, thereby strictly reducing the LLM's predictive uncertainty. In other words, ``approximation quality'' on the sampling objective and ``informativeness'' on the downstream task are bridged by a single greedy procedure, making the use of $\mathcal{S}$ as the LLM context theoretically well-founded rather than purely heuristic.

This algorithm facilitates the extraction of deeper graph insights. The final node sequence can be computed as follows:
\begin{equation}
\begin{split}
    \mathcal{S} = \{c\} \cup \mathcal{N}(c) \cup \bigcup_{v \in \mathcal{N}(c)} \big\{ &v_1, v_2, \dots, v_T \mid \\
    &v_t = \arg \max_{v \in \mathcal{N}(v_{t-1})} \eta(v \mid \mathcal{S}_{t-1}) \big\}.
\end{split}
\end{equation}

\subsection{Alignment Tuning}
\label{sect:alignment tuning}

To improve the graph understanding ability of LLMs, we align the node embedding space with the token space via an alignment projector while addressing distribution shifts across different domains through a tailored normalization strategy. 
Notably, we omit variance scaling to preserve embedding magnitude, which is vital for feature representation.
Formally, for a subgraph \(\mathcal{S}\) with node features \(\mathbf{X}_{\mathcal{S}}\in \mathbb{R}^{N\times d}\):
\begin{equation}
    \mathbf{Z} = f_{\theta}\left(\mathbf{X}_{\mathcal{S}} - \boldsymbol{\mu} \right),
\end{equation}
where \(f_{\theta}(\cdot)\) denotes an MLP and \(\boldsymbol{\mu}=\frac{1}{N}\sum_{i=1}^{N}\mathbf{X}_{\mathcal{S}}^{(i)}\) represents feature-wise mean.

Directly feeding projected graph tokens into LLMs can result in excessively long input sequences, increasing the risk of overfitting to specific data patterns
rather than learning generalizable representations. To address this, we apply an attention-based pooling mechanism after projection to compress the sequence while preserving critical information. Specifically, scaled dot-product cross-attention is employed between learnable queries \(\mathbf{Q} \in \mathbb{R}^{T \times d}\) and input tokens \(\mathbf{Z}\):
\begin{align}
     \mathbf{X}_{\text{Graph}} = \operatorname{Softmax}\left( \frac{ \mathbf{Q} \mathbf{Z}^{\top} }{ \tau \sqrt{d} } \right) \mathbf{Z}.
\end{align}
Here, learnable temperature \(\tau\) and scaling factor \(\sqrt{d}\) stabilize attention, enabling the model to dynamically attend to the most informative elements while producing a fixed-length representation, regardless of the input graph size.

We employ an instruction tuning paradigm, organizing questions and answers in a chat format, with subgraph representations embedded via a \texttt{<graph>} token. The prompt templates for zero-shot node classification and link prediction are shown in Fig.~\ref{prompt:node_classification} and Fig.~\ref{prompt:link_prediction}, respectively.

\definecolor{promptbg}{HTML}{DCE3EA}
\definecolor{promptaccent}{HTML}{6E8499}
\definecolor{promptred}{HTML}{A0524A}
\begin{figure}[t]
\begin{tcolorbox}[
    enhanced,
    colback=promptbg, colframe=promptaccent,
    arc=2pt, boxrule=0.4pt,
    fontupper=\footnotesize,
    title={\textit{Node Classification}},
    coltitle=white, colbacktitle=promptaccent,
    fonttitle=\footnotesize\bfseries,
    left=5pt, right=5pt, top=3pt, bottom=3pt,
    boxsep=2pt
]
Given a node-centered graph: \textcolor{promptred}{<graph>}, each node represents \textcolor{promptred}{<node\_description>}. The 0th node is the central node, with the following information: \textcolor{promptred}{<text>}. The graph consists of the center node, the neighbors of the center node, and the context-relevant nodes. We need to classify the center node into \textcolor{promptred}{<num\_classes>} classes: \textcolor{promptred}{<class\_name>}, please tell me which class the center node belongs to?
\end{tcolorbox}
\caption{Prompt template for zero-shot node classification.}
\label{prompt:node_classification}
\end{figure}

\begin{figure}[t]
\begin{tcolorbox}[
    enhanced,
    colback=promptbg, colframe=promptaccent,
    arc=2pt, boxrule=0.4pt,
    fontupper=\footnotesize,
    title={\textit{Link Prediction}},
    coltitle=white, colbacktitle=promptaccent,
    fonttitle=\footnotesize\bfseries,
    left=5pt, right=5pt, top=3pt, bottom=3pt,
    boxsep=2pt
]
Given two node-centered graphs: For the first graph \textcolor{promptred}{<graph1>}, the central node (node1) has: \textcolor{promptred}{<text1>}. For the second graph \textcolor{promptred}{<graph2>}, the central node (node2) has: \textcolor{promptred}{<text2>}. The graph consists of the center node, the neighbors of the center node, and the context-relevant nodes. We need to classify whether these two nodes are connected in the original graph, please answer Yes or No.
\end{tcolorbox}
\caption{Prompt template for zero-shot link prediction.}
\label{prompt:link_prediction}
\end{figure}

Using Vicuna-7B-v1.5~\cite{vicuna} as the foundation model, we freeze the LLM and optimize only the projector. 
The training objective is to maximize the likelihood of correct answer generation:
\begin{equation}
    \max_{\theta}\prod_{i=1}^{N}p(\mathbf{X}_{\mathrm{answer}}^{(i)} \mid \mathbf{X}_{\mathrm{question}}^{(i)}, \mathbf{X}_{\mathrm{graph}}^{(i)}\ ;\theta ).
\end{equation}

\subsection{Engineering Optimizations for Large Graphs}
\label{sect:engineering}

To be applicable to real-world graph data systems, GraspLLM must remain practical on large TAGs with up to millions of nodes.
The \textit{\textbf{Optimal Contextual Subgraph}} sampling stage is by far the dominant bottleneck at this scale, so we devote a set of dedicated engineering optimizations to it.
The implementation of the sampler in Section~\ref{sect:subgraph sampling} is convenient for the moderate-scale benchmarks studied so far, but a direct port to large TAGs is impractical. 
To make GraspLLM operate at scale without altering its algorithmic semantics, we introduce three layers of system-level optimization. 
Crucially, each step still uses the same processing as Algorithm~\ref{algorithm:graph}; only the data layout and dispatch are different.

\paragraph{\textbf{Algorithmic restructuring}}
We first re-express the per-center sampling loop in a fully vectorized form. The walks anchored at a center $c$ share the same center embedding and only differ in their visited sets. 
We therefore broadcast them along a leading batch dimension $B{=}9$ and replace the candidate set $\mathcal{S}$ by two persistent boolean buffers $\mathtt{visited},\mathtt{cand}\in\{0,1\}^{B\times N}$. Updating $\mathcal{S}$ then reduces to a scatter, taking the open neighborhood reduces to a bitwise AND, and the per-step argmax reduces to a single \texttt{top-k} call. To avoid materializing $[B,N]$ priority tensors on million-node graphs, we further cap candidate ranking at the top $c_{\max}$ candidates per walk (default $c_{\max}{=}256$). On all paper benchmarks this cap is never reached, and the resulting subgraphs are distributionally indistinguishable from the reference implementation.

\paragraph{\textbf{GPU-native data layout}}
We materialize the graph adjacency directly on the GPU as a Compressed Sparse Row (CSR) triple $(\mathtt{nb\_flat},\,\mathtt{ptr},\,\mathtt{deg})$, built once at startup via a single \texttt{argsort}/\texttt{bincount}/\texttt{cumsum} pipeline and reused across all centers. Neighbor lookup then collapses into a contiguous slice $\mathtt{nb\_flat}[\mathtt{ptr}[v]\!:\!\mathtt{ptr}[v{+}1]]$, removing both Python-side overhead and per-step memory allocation. The persistent boolean buffers above are similarly allocated once per worker and reset in place via \texttt{zero\_()}. 
We keep the frozen embeddings in fp16, which roughly halves the memory and bandwidth footprint at no measurable cost in scoring fidelity.

\paragraph{\textbf{Multi-GPU parallelization}}
Because the sampler is invoked independently per center node, scaling out across $P$ GPUs is realized as an embarrassingly parallel partition of the center set, with one worker process per GPU and per-worker GPU-CSR replicas. 
To keep per-GPU memory bounded as $N$ grows, we additionally support a CUDA-IPC--based topology in which a single owner GPU loads the embedding once and exposes it to peer workers via a shared CUDA storage handle, removing the $20$\,GB fp16 embedding replica on non-owner cards. To further reduce kernel-launch overhead when the per-step GPU work is small, we batch $K$ centers per worker forward pass (default $K{=}4$), amortizing launch cost across $9K$ walks per step.

\section{Experiments}

In this section, we conduct comprehensive experiments to validate the effectiveness of GraspLLM across various settings, addressing the following research questions:
\begin{itemize}
    \item \textbf{RQ1:} How well does GraspLLM generalize under in-domain and cross-domain zero-shot settings?
    \item \textbf{RQ2:} How well does GraspLLM perform when faced with an unseen task?
    \item \textbf{RQ3:} How does the \textit{Optimal Contextual Subgraph Sampling} contribute to the zero-shot inference?
    \item \textbf{RQ4:} How does each component of GraspLLM contribute to the final performance?
\end{itemize}

\subsection{Experimental Settings}
\label{sect:settings}

\paragraph{Datasets.} We evaluate the effectiveness of GraspLLM in fourteen widely used real-world text-attributed graphs spanning five diverse domains: 
1) \textbf{Citation network}: Cora~\cite{Cora}, Citeseer~\cite{Citeseer}, Pubmed~\cite{Pubmed} and Arxiv~\cite{OGB};
2) \textbf{E-commerce}: Books-History (History), Ele-Computer (Computer) and Ele-Photo (Photo)~\cite{CS-TAG};
3) \textbf{Wikipedia pages}: WikiCS~\cite{Wikics};
4) \textbf{Social network}: Reddit~\cite{GraphAdapter} and Instagram~\cite{GraphAdapter};
5) \textbf{Web pages}: Cornell, Texas, Washington and Wisconsin~\cite{Webkb}.
Key statistics of the fourteen datasets are summarized in Table~\ref{tab:dataset-stats}, covering scales from $187$ to $169{,}343$ nodes, average degrees from $2.65$ to $36.94$, and global clustering coefficients spanning two orders of magnitude ($0.01$--$0.26$), which collectively reflect the structural heterogeneity that GraspLLM is designed to handle.

\begin{table}[h]
    \centering
    \caption{Statistics of the fourteen TAG benchmarks used in our experiments. \textbf{Avg.\,Deg.}, \textbf{Tri./Node}, and \textbf{Global Clust.} denote the average node degree, the average number of triangles per node, and the global clustering coefficient, respectively. \textbf{\#Cls.} is the number of node-classification classes.}
    \label{tab:dataset-stats}
    \scalebox{0.80}{
    \begin{tabular}{l|rrrrrc}
    \toprule
    \textbf{Dataset} & \textbf{\#Nodes} & \textbf{\#Edges} & \textbf{Avg.\,Deg.} & \textbf{Tri./Node} & \textbf{Global Clust.} & \textbf{\#Cls.} \\
    \midrule
    \multicolumn{7}{l}{\cellcolor{oursbg}\textit{\textbf{Citation}}} \\
    \midrule
    Cora       & 2{,}708   & 5{,}278     & 3.89  & 1.80   & 0.0935 & 7  \\
    Citeseer   & 3{,}186   & 4{,}225     & 2.65  & 0.99   & 0.1291 & 6  \\
    Pubmed     & 19{,}717  & 43{,}244    & 4.49  & 1.90   & 0.0537 & 3  \\
    Arxiv      & 169{,}343 & 1{,}157{,}799 & 13.67 & 39.56  & 0.0162 & 40 \\
    \midrule
    \multicolumn{7}{l}{\cellcolor{oursbg}\textit{\textbf{E-commerce}}} \\
    \midrule
    Computer   & 87{,}229  & 628{,}274   & 14.40 & 60.64  & 0.1025 & 10 \\
    Photo      & 48{,}362  & 436{,}902   & 18.06 & 99.30  & 0.1197 & 12 \\
    History    & 41{,}551  & 251{,}590   & 12.10 & 94.28  & 0.1709 & 12 \\
    \midrule
    \multicolumn{7}{l}{\cellcolor{oursbg}\textit{\textbf{Social}}} \\
    \midrule
    Instagram  & 11{,}339  & 83{,}344    & 14.70 & 36.92  & 0.1638 & 2  \\
    Reddit     & 33{,}434  & 167{,}670   & 10.02 & 5.35   & 0.0103 & 2  \\
    \midrule
    \multicolumn{7}{l}{\cellcolor{oursbg}\textit{\textbf{Web link}}} \\
    \midrule
    WikiCS     & 11{,}701  & 216{,}123   & 36.94 & 826.69 & 0.2623 & 10 \\
    \midrule
    \multicolumn{7}{l}{\cellcolor{oursbg}\textit{\textbf{Web page (School)}}} \\
    \midrule
    Cornell    & 191 & 274 & 2.86 & 0.84 & 0.0323 & 5 \\
    Texas      & 187 & 298 & 3.18 & 1.07 & 0.0327 & 5 \\
    Wisconsin  & 265 & 459 & 3.46 & 1.35 & 0.0397 & 5 \\
    Washington & 229 & 416 & 3.63 & 1.29 & 0.0356 & 5 \\
    \bottomrule
    \end{tabular}
    }
    \vspace{-1mm}
\end{table}

\begin{table*}[h!]
    \centering
    \caption{In-domain zero-shot node classification accuracy on ten unseen TAG datasets, organized by domain (citation, e-commerce, weblink, social, and webpage). The rightmost \textbf{AVG} column summarizes each method's overall in-domain generalization by averaging accuracies over the ten datasets. Per-column ranking is indicated by cell background: \colorbox{bestbg}{best}, \colorbox{secondbg}{second-best}, \colorbox{thirdbg}{third-best}.}
    \label{tab:nc_citation}
    \scalebox{1}{
    \begin{tabular}{c|ccc|cc|c|c|ccc|c}
    \toprule
    \multirow{2}{*}{\textbf{Model}} & \multicolumn{3}{c|}{\textbf{Citation}} & \multicolumn{2}{c|}{\textbf{E-commerce}} & \textbf{Weblink} & \textbf{Social} & \multicolumn{3}{c|}{\textbf{Webpage}} & \multirow{2}{*}{\textbf{AVG}} \\
    \cmidrule{2-11}
    & \textbf{Cora} & \textbf{Citeseer} & \textbf{Pubmed} & \textbf{History} & \textbf{Photo} & \textbf{WikiCS} & \textbf{Instagram} & \textbf{Texas} & \textbf{Washington} & \textbf{Cornell} & \\
    \midrule
    MLP & 0.173 & 0.158 & 0.493 & 0.153 & 0.201 & 0.177 & 0.236 & 0.253 & 0.340 & 0.256 & 0.244 \\
    \midrule
    GCN & 0.180 & 0.151 & 0.498 & 0.152 & 0.253 & 0.142 & 0.242 & 0.389 & 0.474 & 0.340 & 0.282 \\
    GAT & 0.090 & 0.140 & 0.481 & 0.245 & 0.412 & 0.170 & 0.368 & 0.526 & 0.340 & 0.359 & 0.313 \\
    GraphSAGE & 0.122 & 0.098 & 0.383 & 0.157 & 0.149 & 0.213 & 0.332 & 0.552 & 0.277 & 0.359 & 0.264 \\
    \midrule
    NodeFormer & 0.053 & 0.154 & 0.317 & 0.179 & 0.152 & 0.383 & 0.311 & 0.211 & 0.213 & 0.359 & 0.233 \\
    DIFFormer & 0.124 & 0.189 & 0.352 & 0.253 & 0.291 & 0.312 & 0.367 & 0.315 & 0.305 & 0.296 & 0.280 \\
    \midrule
    BERT & 0.193 & 0.254 & 0.361 & 0.205 & 0.273 & 0.305 & 0.531 & 0.471 & 0.439 & 0.440 & 0.347 \\
    RoBERTa & 0.253 & 0.291 & 0.259 & 0.293 & 0.306 & 0.253 & 0.465 & 0.423 & 0.415 & 0.426 & 0.338 \\
    E5 & 0.424 & 0.446 & 0.423 & 0.367 & 0.324 & 0.325 & 0.496 & 0.501 & 0.518 & 0.465 & 0.429 \\
    Sentence-Bert & 0.318 & 0.489 & 0.428 & 0.312 & 0.336 & 0.343 & 0.484 & 0.432 & 0.547 & 0.628 & 0.432 \\
    \midrule
    Qwen2-7B & 0.566 & 0.541 & 0.651 & 0.434 & 0.457 & 0.318 & 0.243 & 0.453 & 0.587 & 0.526 & 0.478 \\
    LLaMA-2-7B & 0.495 & 0.348 & 0.640 & 0.354 & 0.397 & 0.312 & 0.359 & 0.463 & 0.483 & 0.445 & 0.430 \\
    LLaMA-3.1-8B & 0.532 & 0.655 & 0.689 & 0.376 & 0.429 & 0.356 & 0.417 & 0.442 & 0.438 & 0.562 & 0.490 \\
    Vicuna-7B-v1.5 & 0.490 & 0.381 & 0.632 & 0.324 & 0.369 & 0.349 & 0.320 & 0.497 & 0.463 & 0.560 & 0.439 \\
    Vicuna-7B-IT & 0.513 & 0.410 & 0.663 & 0.343 & 0.381 & 0.374 & 0.356 & 0.513 & 0.489 & 0.620 & 0.466 \\
    \midrule
    OFA & 0.171 & 0.417 & 0.379 & 0.052 & 0.340 & 0.359 & 0.406 & 0.135 & 0.087 & 0.203 & 0.255 \\
    ZeroG & 0.603 & 0.534 & 0.732 & 0.327 & 0.463 & 0.627 & 0.507 & 0.236 & 0.105 & 0.109 & 0.424 \\
    GraphCLIP & 0.602 & 0.627 & 0.388 & 0.491 & 0.368 & 0.614 & \cellcolor{secondbg}0.571 & 0.185 & 0.279 & 0.234 & 0.436 \\
    UniGLM & 0.456 & 0.523 & 0.432 & 0.442 & 0.376 & 0.551 & 0.394 & 0.384 & 0.310 & 0.230 & 0.410 \\
    LLM-BP & \cellcolor{bestbg}\textbf{0.714} & \cellcolor{bestbg}\textbf{0.701} & 0.768 & 0.595 & 0.519 & 0.680 & 0.483 & 0.794 & 0.700 & \cellcolor{secondbg}0.849 & 0.680 \\
    \midrule
    GraphGPT & 0.381 & 0.373 & 0.701 & 0.153 & \cellcolor{bestbg}\textbf{0.762} & 0.536 & 0.439 & 0.585 & 0.676 & 0.602 & 0.521 \\
    LLaGA & 0.348 & 0.347 & 0.793 & 0.363 & 0.392 & 0.710 & 0.479 & 0.616 & 0.625 & 0.714 & 0.539 \\
    TEA-GLM & 0.379 & 0.402 & 0.848 & 0.528 & 0.497 & 0.691 & 0.508 & 0.791 & 0.762 & 0.805 & 0.621 \\
    GOFA & \cellcolor{secondbg}0.711 & 0.657 & 0.748 & 0.563 & 0.487 & 0.563 & 0.418 & 0.384 & 0.310 & 0.395 & 0.536 \\
    \midrule
    \multicolumn{12}{l}{\cellcolor{oursbg}\textit{\textbf{GraspLLM (Ours)}}} \\
    \midrule
    \textbf{Vicuna-7B-v1.5} & 0.614 & \cellcolor{thirdbg}0.681 & \cellcolor{secondbg}0.869 & \cellcolor{secondbg}0.658 & 0.554 & \cellcolor{thirdbg}0.855 & \cellcolor{secondbg}0.571 & \cellcolor{secondbg}0.868 & \cellcolor{secondbg}0.872 & \cellcolor{bestbg}\textbf{0.897} & \cellcolor{thirdbg}0.740 \\
    \textbf{Mistral-7B-v0.3} & 0.613 & 0.618 & 0.819 & 0.527 & 0.627 & 0.837 & \cellcolor{bestbg}\textbf{0.636} & \cellcolor{thirdbg}0.842 & \cellcolor{bestbg}\textbf{0.894} & 0.821 & 0.723 \\
    \textbf{LLaMA-3.1-8B} & \cellcolor{thirdbg}0.635 & 0.641 & \cellcolor{thirdbg}0.850 & \cellcolor{thirdbg}0.640 & \cellcolor{thirdbg}0.633 & \cellcolor{secondbg}0.866 & \cellcolor{thirdbg}0.543 & \cellcolor{bestbg}\textbf{0.895} & \cellcolor{thirdbg}0.851 & \cellcolor{bestbg}\textbf{0.897} & \cellcolor{secondbg}0.745 \\
    \textbf{Qwen3-8B} & 0.624 & \cellcolor{secondbg}0.683 & \cellcolor{bestbg}\textbf{0.894} & \cellcolor{bestbg}\textbf{0.663} & \cellcolor{secondbg}0.641 & \cellcolor{bestbg}\textbf{0.874} & 0.539 & \cellcolor{secondbg}0.868 & \cellcolor{bestbg}\textbf{0.894} & \cellcolor{thirdbg}0.846 & \cellcolor{bestbg}\textbf{0.753} \\
    \bottomrule
    \end{tabular}
    }
\end{table*}

\paragraph{Baselines.}We conduct a comprehensive comparison with seven categories of baselines: 
1) \textbf{MLP};
2) \textbf{GNNs}: GCN~\cite{GCN}, GAT~\cite{GAT}, GraphSAGE~\cite{SAGE};
3) \textbf{Graph Transformers}: NodeFormer~\cite{nodeformer} and DIFFormer~\cite{DIFFormer};
4) \textbf{Small LMs}: BERT~\cite{BERT}, RoBERTa~\cite{RoBERTa}, E5~\cite{e5} and Sentence-Bert~\cite{sentence-bert};
5) \textbf{LLMs}: Qwen2-7B~\cite{qwen}, LLaMA-2-7B~\cite{llama2}, LLaMA-3.1-8B~\cite{llama3} and Vicuna-7B-v1.5~\cite{vicuna};
6) \textbf{\textit{LLM as Enhancers}}: OFA~\cite{OFA}, ZeroG~\cite{ZeroG}, GraphCLIP~\cite{GraphCLIP}, UniGLM~\cite{UniGLM} and LLM-BP~\cite{LLM-BP};
7) \textbf{\textit{LLM as Predictors}}: GraphGPT~\cite{GraphGPT}, LLaGA~\cite{LLaGA}, TEA-GLM~\cite{TEA-GLM} and GOFA~\cite{GOFA}.

\paragraph{Implementation Details.}
We use Qwen3-Embedding-8B~\cite{zhang2025qwen3emb} as the frozen unified text encoder for all datasets, and instantiate the structural extractor as a $2$-layer motif-aware GNN. As the foundation LLM, we adopt Vicuna-7B-v1.5~\cite{vicuna} as the default backbone for fair comparison with prior methods (GraphGPT, LLaGA, TEA-GLM). Generalization to other backbones, including Mistral-7B-Instruct-v0.3~\cite{mistral}, LLaMA3.1-8B-Instruct~\cite{llama3}, and Qwen3-8B-Instruct~\cite{yang2025qwen3}, is additionally reported. During alignment tuning, the LLM is frozen and only the alignment projector is optimized. All experiments are conducted with $8\times$NVIDIA H20 GPUs and CUDA $12.8$. 

\subsection{Zero-Shot In-Domain and Cross-Domain Inference Ability (RQ1)}
\label{sect:in-domain & cros-domain}

To evaluate the generalizability of our proposed GraspLLM, we conduct extensive evaluations across in-domain and cross-domain zero-shot settings.

\subsubsection{In-domain Zero-shot Ability}
We train all methods on Arxiv, Computer, Reddit, and Wisconsin, respectively, each representing a specific domain, and evaluate the zero-shot node classification performance within the same domain. 
Specifically, the WikiCS dataset is evaluated on the models trained on the Computer dataset. 
GNN-based models are trained with linear classification heads, and LLM baselines are directly applied for predictions. Additionally, we fine-tune Vicuna-7B-v1.5 with soft prompts (Vicuna-7B-v1.5-IT).

As shown in Table \ref{tab:nc_citation}, 
GraspLLM consistently outperforms baselines on most datasets, achieving superior performance.
Remarkably, it exceeds the best baselines by 0.1 on History, WikiCS, and Webpage datasets, demonstrating its effectiveness. 
In contrast to single GNNs and LLMs,
our approach integrates both to enhance zero-shot generalization. 
Prior LLM as predictor methods tend to overfit source data, which is especially serious when the distribution shift is rather small (e.g., Arxiv and Cora or Citeseer). 
In contrast, GraspLLM performs well in these scenarios, significantly mitigating this issue. 

\begin{table*}[tbp!]
    \centering
    \caption{Cross-domain zero-shot node classification accuracy of GraspLLM across four LLM backbones. For each backbone, \textbf{In-D} denotes its in-domain zero-shot accuracy on the target dataset, and \textbf{Cr-D} denotes its zero-shot accuracy when transferred from the source dataset. The delta in parentheses is Cr-D~$-$~In-D. \colorbox{ysshallowred}{Highlighted} cells with a \textcolor{red4}{red} delta mark transfers that \emph{match or surpass} the corresponding in-domain results.}
    \label{tab:nc_cross-domain}
    \scalebox{1}{
    \begin{tabular}{l|l|cc|cc|cc|cc|cc}
        \toprule
        \multirow{2}{*}{\textbf{Source}} & \multirow{2}{*}{\textbf{Target}} & \multirow{2}{*}{\textbf{LLaGA}} & \multirow{2}{*}{\textbf{TEA-GLM}} & \multicolumn{2}{c|}{\textbf{Vicuna-7B-v1.5}} & \multicolumn{2}{c|}{\textbf{Mistral-7B-v0.3}} & \multicolumn{2}{c|}{\textbf{LLaMA-3.1-8B}} & \multicolumn{2}{c}{\textbf{Qwen3-8B}} \\
        & & & & \textbf{In-D} & \textbf{Cr-D} & \textbf{In-D} & \textbf{Cr-D} & \textbf{In-D} & \textbf{Cr-D} & \textbf{In-D} & \textbf{Cr-D} \\
        \midrule
        \multirow{4}{*}{Arxiv}
        & WikiCS    & 0.589 & 0.632 & 0.855 & 0.636\,\textcolor{blue4}{\scriptsize{($-$.219)}} & 0.837 & 0.674\,\textcolor{blue4}{\scriptsize{($-$.163)}} & 0.866 & 0.728\,\textcolor{blue4}{\scriptsize{($-$.138)}} & 0.874 & 0.743\,\textcolor{blue4}{\scriptsize{($-$.131)}} \\
        & Instagram & 0.135 & 0.163 & 0.571 & 0.520\,\textcolor{blue4}{\scriptsize{($-$.051)}} & 0.636 & 0.572\,\textcolor{blue4}{\scriptsize{($-$.064)}} & 0.543 & 0.462\,\textcolor{blue4}{\scriptsize{($-$.081)}} & 0.539 & \cellcolor{ysshallowred}0.566\,\textcolor{red4}{\scriptsize{($+$.027)}} \\
        & History   & 0.375 & 0.205 & 0.656 & 0.166\,\textcolor{blue4}{\scriptsize{($-$.490)}} & 0.527 & 0.277\,\textcolor{blue4}{\scriptsize{($-$.250)}} & 0.640 & 0.636\,\textcolor{blue4}{\scriptsize{($-$.004)}} & 0.663 & 0.351\,\textcolor{blue4}{\scriptsize{($-$.312)}} \\
        & Photo     & 0.216 & 0.110 & 0.554 & 0.456\,\textcolor{blue4}{\scriptsize{($-$.098)}} & 0.627 & 0.527\,\textcolor{blue4}{\scriptsize{($-$.100)}} & 0.633 & 0.289\,\textcolor{blue4}{\scriptsize{($-$.344)}} & 0.641 & 0.435\,\textcolor{blue4}{\scriptsize{($-$.206)}} \\
        \midrule
        \multirow{3}{*}{Reddit}
        & Cora      & ---   & ---   & 0.614 & 0.388\,\textcolor{blue4}{\scriptsize{($-$.226)}} & 0.613 & 0.319\,\textcolor{blue4}{\scriptsize{($-$.294)}} & 0.635 & 0.358\,\textcolor{blue4}{\scriptsize{($-$.277)}} & 0.624 & 0.605\,\textcolor{blue4}{\scriptsize{($-$.019)}} \\
        & Pubmed    & 0.213 & 0.316 & 0.869 & 0.832\,\textcolor{blue4}{\scriptsize{($-$.037)}} & 0.819 & \cellcolor{ysshallowred}0.892\,\textcolor{red4}{\scriptsize{($+$.073)}} & 0.850 & 0.802\,\textcolor{blue4}{\scriptsize{($-$.048)}} & 0.894 & 0.881\,\textcolor{blue4}{\scriptsize{($-$.013)}} \\
        & History   & 0.199 & 0.192 & 0.656 & 0.229\,\textcolor{blue4}{\scriptsize{($-$.427)}} & 0.527 & 0.183\,\textcolor{blue4}{\scriptsize{($-$.344)}} & 0.640 & \cellcolor{ysshallowred}0.669\,\textcolor{red4}{\scriptsize{($+$.029)}} & 0.663 & 0.184\,\textcolor{blue4}{\scriptsize{($-$.479)}} \\
        \midrule
        \multirow{3}{*}{Computer}
        & Pubmed    & 0.408 & 0.496 & 0.869 & 0.750\,\textcolor{blue4}{\scriptsize{($-$.119)}} & 0.819 & \cellcolor{ysshallowred}0.881\,\textcolor{red4}{\scriptsize{($+$.062)}} & 0.850 & 0.750\,\textcolor{blue4}{\scriptsize{($-$.100)}} & 0.894 & \cellcolor{ysshallowred}0.898\,\textcolor{red4}{\scriptsize{($+$.004)}} \\
        & Photo     & ---   & ---   & 0.554 & 0.392\,\textcolor{blue4}{\scriptsize{($-$.162)}} & 0.627 & \cellcolor{ysshallowred}0.627\,\textcolor{red4}{\scriptsize{($\pm$.000)}} & 0.633 & 0.334\,\textcolor{blue4}{\scriptsize{($-$.299)}} & 0.641 & 0.408\,\textcolor{blue4}{\scriptsize{($-$.233)}} \\
        & Instagram & 0.348 & 0.240 & 0.571 & 0.368\,\textcolor{blue4}{\scriptsize{($-$.203)}} & 0.636 & 0.421\,\textcolor{blue4}{\scriptsize{($-$.215)}} & 0.543 & 0.393\,\textcolor{blue4}{\scriptsize{($-$.150)}} & 0.539 & \cellcolor{ysshallowred}0.626\,\textcolor{red4}{\scriptsize{($+$.087)}} \\
        \bottomrule
    \end{tabular}
    }
    \vspace{-1mm}
\end{table*}

\subsubsection{Cross-Domain Zero-Shot Ability}
We further assess GraspLLM's ability to generalize across domains, which presents a greater challenge than in-domain tasks. Specifically, models trained on the Arxiv, Computer, and Reddit datasets are tested on node classification tasks across datasets from different domains. 
LLaGA and TEA-GLM serve as the main baselines, which are specifically designed to integrate graph machine learning with LLMs. 

As depicted in Table \ref{tab:nc_cross-domain}, GraspLLM consistently outperforms baselines on all target datasets, with improvements exceeding 0.25 in most cases, highlighting GraspLLM's strong cross-domain generalization capability. 
It is reasonable to observe a performance drop for most transfers compared with in-domain zero-shot results, which can be attributed to semantic and distribution gaps between domains. 
Interestingly, we observe that for Pubmed, the models transferred from Reddit and Computer can match or even surpass the in-domain results across multiple backbones, and a similar phenomenon is also seen on Instagram with the Qwen3 backbone. 
This suggests that certain textual or structural properties of these datasets may align more closely with those in the source domains, facilitating better generalizability. 

\subsection{Cross-Task Zero-Shot Inference Ability (RQ2)}

\begin{table*}[h!]
    \centering
    \caption{Cross-task zero-shot link prediction AUC on ten TAG datasets, organized by domain (citation, e-commerce, weblink, and social). Per-column ranking is indicated by cell background: \colorbox{bestbg}{best}, \colorbox{secondbg}{second-best}, \colorbox{thirdbg}{third-best}.}
    \label{tab:lp-cross-task}
    \scalebox{1.0}{
    \begin{tabular}{c|cccc|ccc|c|cc}
    \toprule
    \multirow{2}{*}{\textbf{Model}} & \multicolumn{4}{c|}{\textbf{Citation}} & \multicolumn{3}{c|}{\textbf{E-commerce}} & \textbf{Weblink} & \multicolumn{2}{c}{\textbf{Social}} \\
    \cmidrule{2-11}
    & \textbf{Cora} & \textbf{Citeseer} & \textbf{Pubmed} & \textbf{Arxiv} & \textbf{History} & \textbf{Photo} & \textbf{Computer} & \textbf{WikiCS} & \textbf{Instagram} & \textbf{Reddit} \\
    \midrule
    Vicuna-7B-v1.5 & 0.518 & 0.504 & 0.535 & 0.516 & 0.523 & 0.497 & 0.512 & 0.514 & 0.501 & 0.548 \\
    Vicuna-7B-IT & 0.536 & 0.513 & 0.542 & 0.540 & 0.552 & 0.501 & 0.509 & 0.536 & 0.547 & 0.564 \\
    \midrule
    OFA & 0.486 & 0.481 & 0.492 & 0.496 & 0.441 & 0.446 & 0.468 & 0.496 & 0.487 & 0.502 \\
    ZeroG & 0.491 & 0.513 & 0.519 & 0.527 & 0.437 & 0.458 & 0.455 & 0.549 & 0.536 & 0.496 \\
    GraphCLIP & 0.527 & 0.496 & 0.501 & 0.553 & 0.499 & 0.523 & 0.499 & 0.536 & 0.598 & 0.527 \\
    \midrule
    GraphGPT & 0.531 & 0.562 & 0.510 & 0.632 & 0.510 & 0.533 & 0.524 & 0.518 & 0.613 & 0.587 \\
    LLaGA & 0.522 & 0.543 & 0.591 & 0.580 & 0.446 & 0.467 & 0.485 & 0.510 & 0.587 & 0.463 \\
    TEA-GLM & 0.586 & 0.624 & 0.689 & 0.657 & 0.579 & 0.545 & 0.554 & \cellcolor{bestbg}\textbf{0.549} & 0.632 & 0.596 \\
    \midrule
    \multicolumn{11}{l}{\cellcolor{oursbg}\textit{\textbf{GraspLLM (Ours)}}} \\
    \midrule
    \textbf{Vicuna-7B-v1.5}   & \cellcolor{thirdbg}0.642 & \cellcolor{thirdbg}0.673 & \cellcolor{thirdbg}0.728 & \cellcolor{thirdbg}0.681 & 0.587 & \cellcolor{thirdbg}0.569 & \cellcolor{thirdbg}0.557 & 0.529 & \cellcolor{thirdbg}0.648 & 0.619 \\
    \textbf{Mistral-7B-v0.3}  & 0.629 & 0.658 & 0.711 & 0.665 & \cellcolor{bestbg}\textbf{0.611} & \cellcolor{secondbg}0.582 & \cellcolor{secondbg}0.566 & \cellcolor{thirdbg}0.544 & \cellcolor{bestbg}\textbf{0.674} & \cellcolor{secondbg}0.638 \\
    \textbf{LLaMA-3.1-8B} & \cellcolor{bestbg}\textbf{0.661} & \cellcolor{secondbg}0.689 & \cellcolor{secondbg}0.736 & \cellcolor{bestbg}\textbf{0.704} & \cellcolor{thirdbg}0.598 & 0.567 & 0.553 & \cellcolor{secondbg}0.547 & 0.633 & \cellcolor{thirdbg}0.628 \\
    \textbf{Qwen3-8B}    & \cellcolor{secondbg}0.658 & \cellcolor{bestbg}\textbf{0.694} & \cellcolor{bestbg}\textbf{0.751} & \cellcolor{secondbg}0.692 & \cellcolor{secondbg}0.605 & \cellcolor{bestbg}\textbf{0.585} & \cellcolor{bestbg}\textbf{0.572} & 0.541 & \cellcolor{secondbg}0.659 & \cellcolor{bestbg}\textbf{0.642} \\
    \bottomrule
    \end{tabular}
    }
\end{table*}

We employ models trained on node classification tasks directly for link prediction tasks without additional fine-tuning, evaluating the cross-task generalizability. 
Compared to node classification, link prediction differs in both granularity and objective.
The AUC scores are reported in Table~\ref{tab:lp-cross-task} across LLM-only, LLM as Enhancer, and LLM as Predictor baselines.
Fine-tuning improves Vicuna-7B-IT over its base version, yet both remain suboptimal due to insufficient structural understanding. Previous LLM as predictor methods, such as GraphGPT and LLaGA, exhibit poor generalization, largely due to overfitting to task-specific training data. 
TEA-GLM performs better but is still constrained by its narrow focus on local structural patterns and the limited generalization of the GNN. 
In contrast, GraspLLM consistently outperforms all baselines across most datasets, and this advantage holds across all four LLM backbones, with Qwen3 and LLaMA-3.1 ranking best on the majority of columns. 
The robustness of the gains across backbones suggests that GraspLLM captures task-agnostic structural priors and thus mitigates the risk of overfitting during training.

\begin{figure*}[h!]
    \centering
    \includegraphics[width=0.95\textwidth]{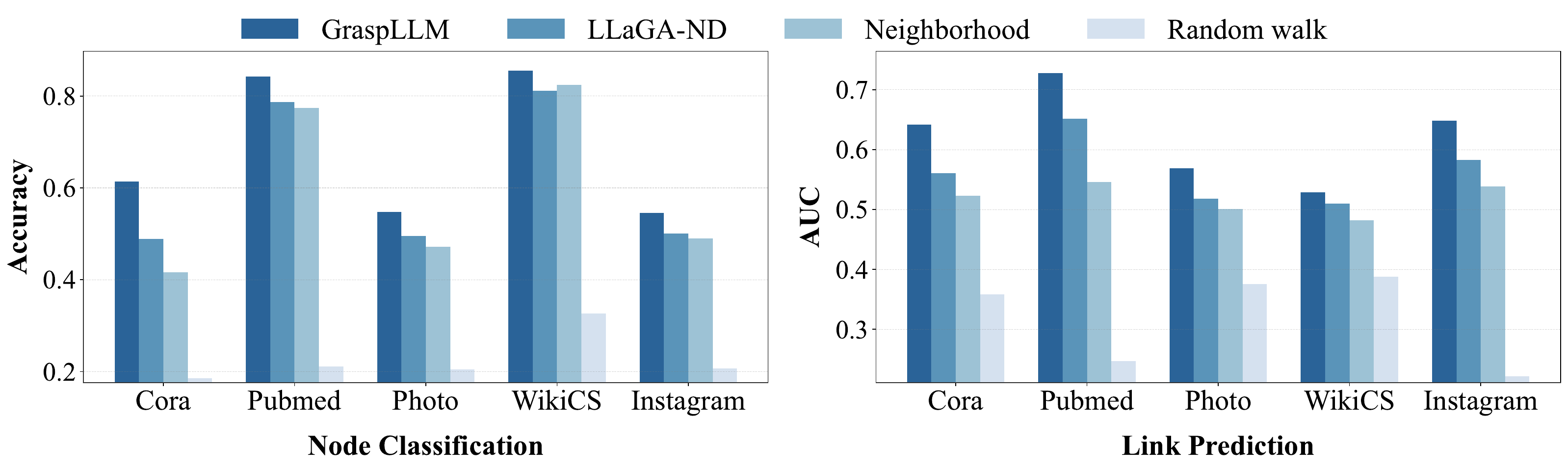}
    \caption{Comparison of four subgraph sampling and structure organizing strategies: LLaGA-ND, $2$-hop Neighborhood, Random Walk, and our \textit{Optimal Contextual Subgraph}. Each group of bars contrasts the four strategies on one dataset.}
    \label{fig:subgraph}
    \vspace{-1mm}
\end{figure*}

\subsection{Analysis on \textit{Optimal Contextual Subgraph Sampling} (RQ3)}
\label{sect:subgraph-sampling}

We evaluate the effectiveness of \textbf{\textit{optimal contextual subgraph sampling}} strategy (Section~\ref{sect:subgraph sampling}) against three 
baseline methods, including the Neighborhood Detail Template from LLaGA (LLaGA-ND), 1-hop and 2-hop neighbor sampling (Neighborhood), and vanilla random walk (Random Walk). 

\begin{table*}[h]
    \centering
    \caption{Ablation study of the two key components in GraspLLM. Each row corresponds to a variant of the full framework: ``w/o Structural Extractor'' isolates the contribution of the motif-aware self-supervised GNN by replacing it with a randomly initialized one, while ``w/o Alignment Projector'' isolates the contribution of our projector by replacing it with a plain linear projector.}
    \label{tab:ablations}
    \resizebox{\textwidth}{!}{
    \begin{tabular}{c|ccccc|ccccc}
    \toprule
    \multirow{2}{*}{\textbf{Method}} & \multicolumn{5}{c|}{\textbf{Node Classification}} & \multicolumn{5}{c}{\textbf{Link Prediction}} \\
    \cmidrule{2-11}
    & \textbf{Cora} & \textbf{Pubmed} & \textbf{Photo} & \textbf{WikiCS} & \textbf{Instagram} & \textbf{Cora} & \textbf{Pubmed} & \textbf{Photo} & \textbf{WikiCS} & \textbf{Instagram} \\
    \midrule
        w/o Structural Extractor & 0.474 & 0.798 & 0.491 & 0.713 & 0.448 & 0.587 & 0.679 & 0.483 & 0.481 & 0.518 \\
    w/o Alignment Projector  & 0.509 & 0.835 & 0.519 & 0.794 & 0.531 & 0.611 & 0.701 & 0.527 & 0.504 & 0.602 \\ \midrule
    \textbf{GraspLLM}                 & \textbf{0.614} & \textbf{0.869} & \textbf{0.554} & \textbf{0.855} & \textbf{0.571} & \textbf{0.642} & \textbf{0.728} & \textbf{0.569} & \textbf{0.529} & \textbf{0.648} \\
    \bottomrule
    \end{tabular}
    }
    \vspace{-1mm}
\end{table*}

\begin{table*}[h]
    \centering
    \caption{Supervised node classification accuracy on ten TAG benchmarks, organized by domain (citation, e-commerce, weblink, and social). The rightmost \textbf{AVG} column averages each method's accuracy over the ten datasets. Per-column ranking is indicated by cell background: \colorbox{bestbg}{best}, \colorbox{secondbg}{second-best}, \colorbox{thirdbg}{third-best}.}
    \label{tab:nc-supervised-little}
    \resizebox{\textwidth}{!}{
        \begin{tabular}{c|cccc|ccc|c|cc|c}
        \toprule
        \multirow{2}{*}{\textbf{Model}} & \multicolumn{4}{c|}{\textbf{Citation}} & \multicolumn{3}{c|}{\textbf{E-commerce}} & \textbf{Weblink} & \multicolumn{2}{c|}{\textbf{Social}} & \multirow{2}{*}{\textbf{AVG}} \\
        \cmidrule{2-11}
        & \textbf{Cora} & \textbf{Citeseer} & \textbf{Pubmed} & \textbf{Arxiv} & \textbf{History} & \textbf{Photo} & \textbf{Computer} & \textbf{WikiCS} & \textbf{Instagram} & \textbf{Reddit} & \\
        \midrule
        GCN       & 0.854 & 0.745 & 0.886 & 0.727 & 0.746 & 0.764 & 0.809 & 0.843 & 0.649 & 0.645 & 0.767 \\
        GAT       & 0.877 & 0.746 & 0.887 & 0.720 & 0.810 & 0.844 & 0.885 & 0.825 & 0.625 & 0.600 & 0.782 \\
        GraphSAGE & 0.825 & 0.691 & 0.874 & 0.713 & 0.790 & 0.850 & 0.861 & 0.839 & 0.631 & 0.617 & 0.769 \\
        \midrule
        Vicuna-7B-v1.5 & 0.705 & 0.639 & 0.932 & 0.711 & 0.847 & 0.787 & 0.745 & 0.802 & 0.476 & 0.523 & 0.717 \\
        \midrule
        ENGINE    & 0.874 & 0.735 & 0.910 & 0.747 & 0.851 & 0.859 & 0.892 & \cellcolor{secondbg}0.864 & \cellcolor{secondbg}0.679 & 0.656 & 0.807 \\
        TAPE      & 0.880 & 0.761 & 0.932 & 0.739 & 0.841 & 0.860 & 0.901 & 0.831 & 0.632 & 0.651 & 0.803 \\
        GraphCLIP & 0.647 & 0.659 & 0.577 & 0.532 & 0.739 & 0.735 & 0.782 & 0.795 & 0.556 & 0.606 & 0.663 \\
        \midrule
        GraphGPT  & 0.865 & 0.757 & 0.936 & 0.751 & 0.854 & 0.835 & 0.879 & 0.834 & 0.627 & 0.640 & 0.798 \\
        LLaGA     & 0.892 & 0.788 & \cellcolor{thirdbg}0.951 & \cellcolor{secondbg}0.767 & 0.867 & 0.863 & 0.902 & 0.851 & \cellcolor{bestbg}\textbf{0.684} & 0.662 & 0.823 \\
        TEA-GLM   & 0.873 & 0.740 & 0.866 & 0.655 & 0.839 & 0.716 & 0.578 & 0.796 & 0.595 & 0.547 & 0.721 \\
        \midrule
        \multicolumn{12}{l}{\cellcolor{oursbg}\textit{\textbf{GraspLLM (Ours)}}} \\
        \midrule
        \textbf{Vicuna-7B-v1.5}   & \cellcolor{thirdbg}0.900 & \cellcolor{bestbg}\textbf{0.827} & \cellcolor{secondbg}0.953 & 0.761 & \cellcolor{thirdbg}0.882 & 0.879 & \cellcolor{thirdbg}0.916 & 0.857 & 0.660 & \cellcolor{secondbg}0.672 & \cellcolor{thirdbg}0.831 \\
        \textbf{Mistral-7B-v0.3}  & 0.895 & 0.802 & 0.944 & 0.752 & 0.873 & \cellcolor{secondbg}0.882 & 0.912 & 0.849 & \cellcolor{thirdbg}0.674 & 0.665 & 0.825 \\
        \textbf{LLaMA-3.1-8B} & \cellcolor{secondbg}0.906 & \cellcolor{secondbg}0.823 & \cellcolor{thirdbg}0.951 & \cellcolor{bestbg}\textbf{0.770} & \cellcolor{bestbg}\textbf{0.890} & \cellcolor{thirdbg}0.880 & \cellcolor{secondbg}0.919 & \cellcolor{bestbg}\textbf{0.868} & 0.652 & \cellcolor{thirdbg}0.668 & \cellcolor{secondbg}0.833 \\
        \textbf{Qwen3-8B}    & \cellcolor{bestbg}\textbf{0.911} & \cellcolor{thirdbg}0.819 & \cellcolor{bestbg}\textbf{0.964} & \cellcolor{thirdbg}0.763 & \cellcolor{secondbg}0.886 & \cellcolor{bestbg}\textbf{0.889} & \cellcolor{bestbg}\textbf{0.928} & \cellcolor{thirdbg}0.860 & 0.657 & \cellcolor{bestbg}\textbf{0.677} & \cellcolor{bestbg}\textbf{0.835} \\
        \bottomrule
        \end{tabular}
    }
    \vspace{-1mm}
\end{table*}

As shown in Fig.~\ref{fig:subgraph},  our method consistently outperforms the other three approaches across all datasets. LLaGA-ND ranks second, followed by the neighbor-based method, both focusing on local neighborhoods. In contrast, our approach achieves better performance by exploring the graph more deeply and capturing richer structural insights. Despite delving deeper into the graph, Random Walk performs significantly worse. This can be attributed to the randomly sampled subgraphs lacking coherent contextual and structural cues, leading to confusion for the LLM. 
In contrast, our \textbf{\textit{optimal contextual subgraph}} captures both semantic and structural information, empowering LLMs to acquire more generalizable graph representations.

\paragraph{Case Study.}
\begin{figure}[t]
    \centering
    \begin{subfigure}{\columnwidth}
        \centering
        \includegraphics[width=\linewidth]{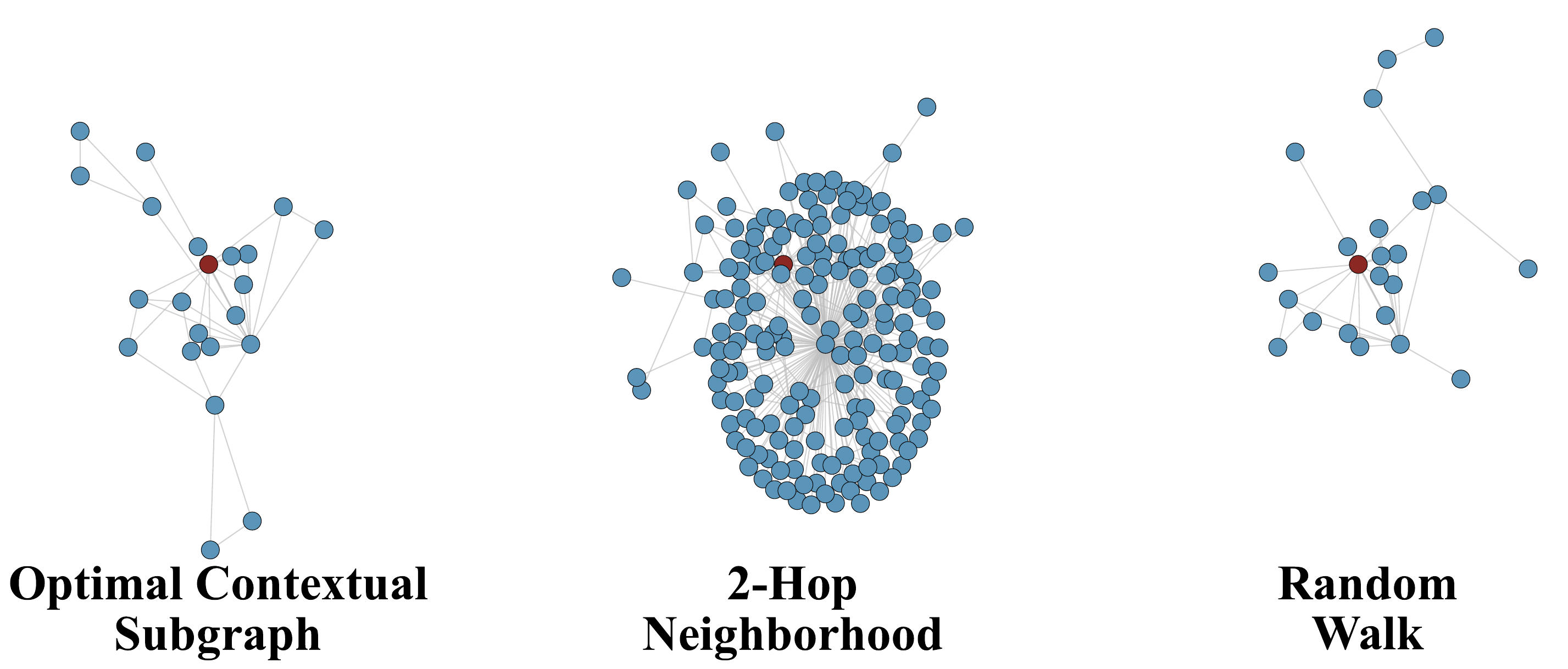}
        \caption{Cora (center node $\#1153$)}
        \label{fig:case-subgraph-cora}
    \end{subfigure}
    \vspace{6pt}
    \begin{subfigure}{\columnwidth}
        \centering
        \includegraphics[width=\linewidth]{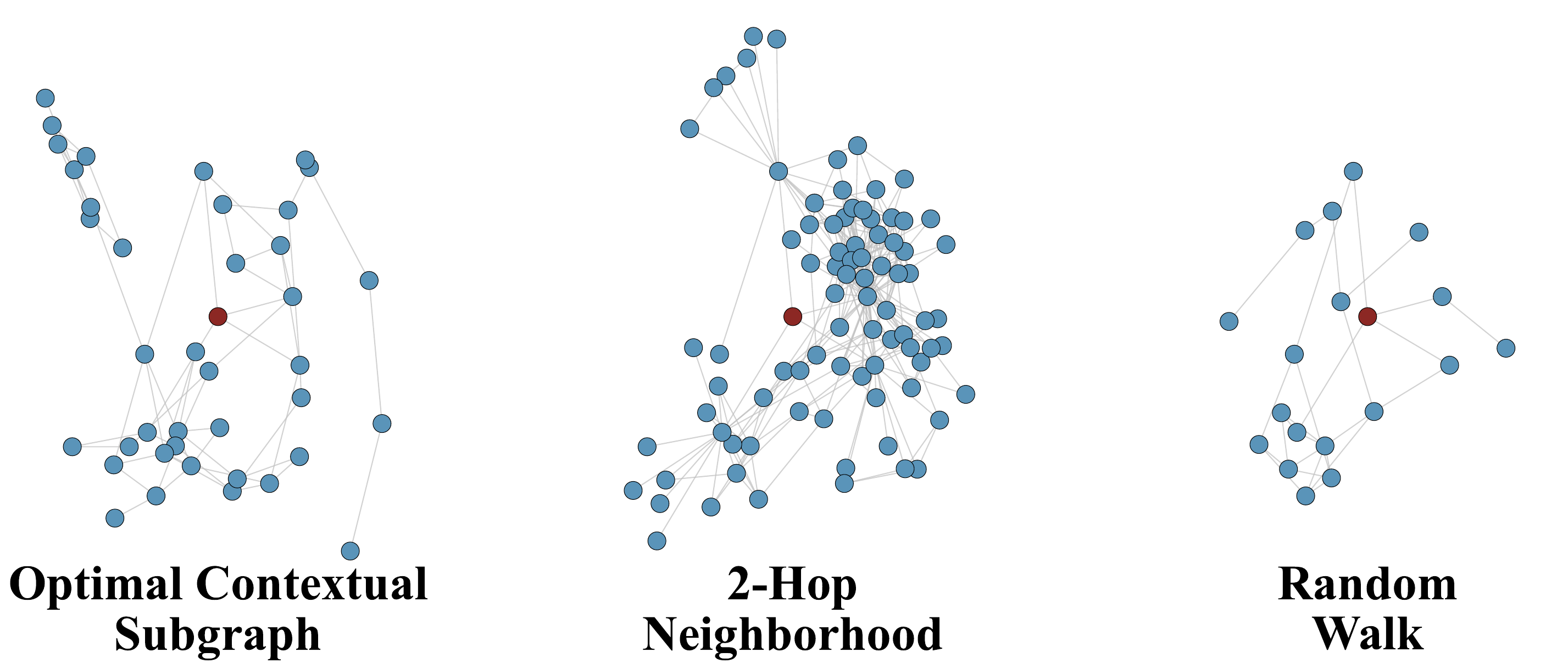}
        \caption{Pubmed (center node $\#1540$)}
        \label{fig:case-subgraph-pubmed}
    \end{subfigure}
    \vspace{6pt}
    \begin{subfigure}{\columnwidth}
        \centering
        \includegraphics[width=\linewidth]{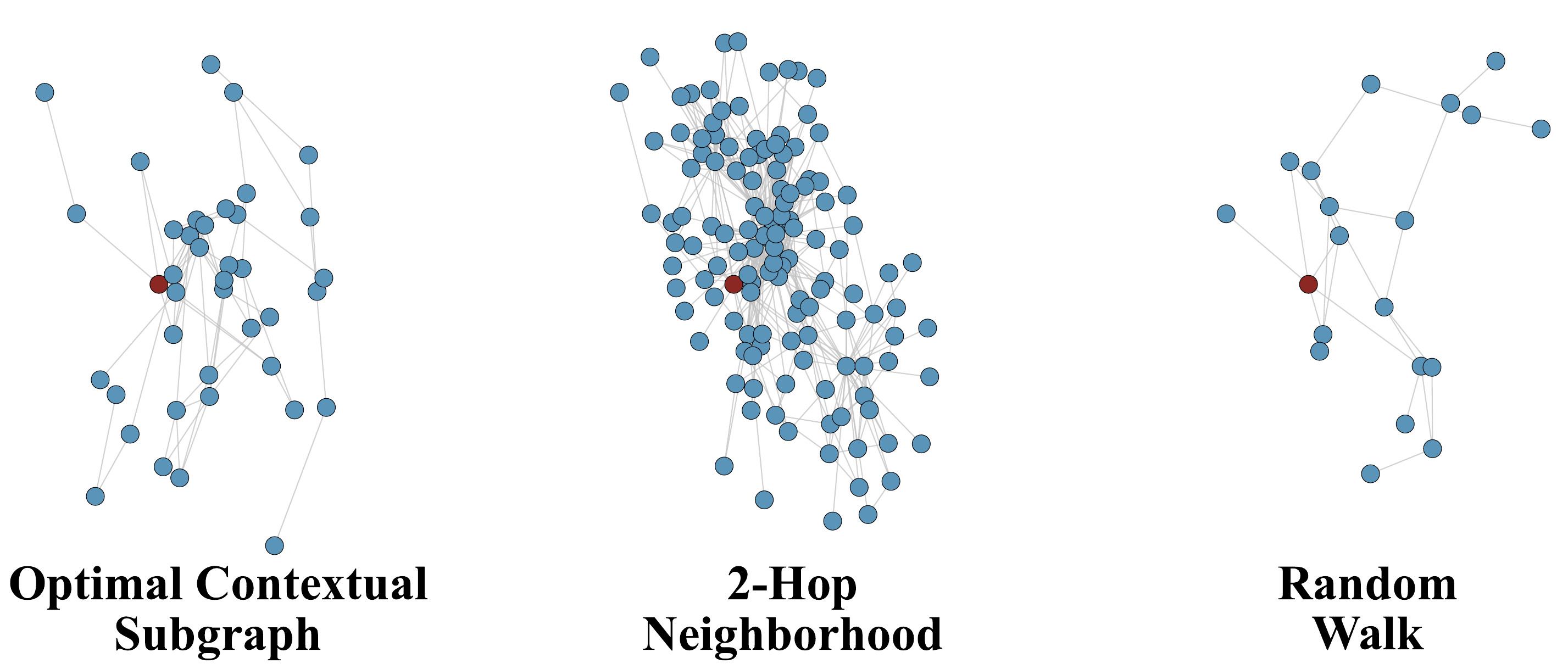}
        \caption{Instagram (center node $\#2877$)}
        \label{fig:case-subgraph-instagram}
    \end{subfigure}
    \caption{Visualization of subgraphs sampled by the Neighborhood, Random Walk, and our Optimal Contextual Subgraph strategies on three representative datasets (center nodes marked in \textcolor[HTML]{8C2824}{red}, and the sampled nodes are marked in \textcolor[HTML]{5A94B9}{blue}).}
    \label{fig:case-subgraph}
    \vspace{-1mm}
\end{figure}
We further visualize three representative cases spanning citation (Cora), biomedical (Pubmed), and social (Instagram) graphs (Fig.~\ref{fig:case-subgraph}) to make the qualitative differences concrete. Across all three datasets, the neighborhood sampler is dominated by local connections, which dilutes the center node with neighboring clusters and risks exponential expansion. Random Walk explores deeper but produces nodes that are mutually weakly related, offering limited contextual coherence. 
In contrast, our \textit{Optimal Contextual Subgraph} consistently yields a compact yet far-reaching subgraph that filters out marginally relevant nodes and captures informative nodes from as far as $10$ hops away, demonstrating its robustness across diverse graph domains.

\paragraph{Parameter Sensitivity.}
\begin{figure}[t]
    \centering
    \includegraphics[width=1.0\columnwidth]{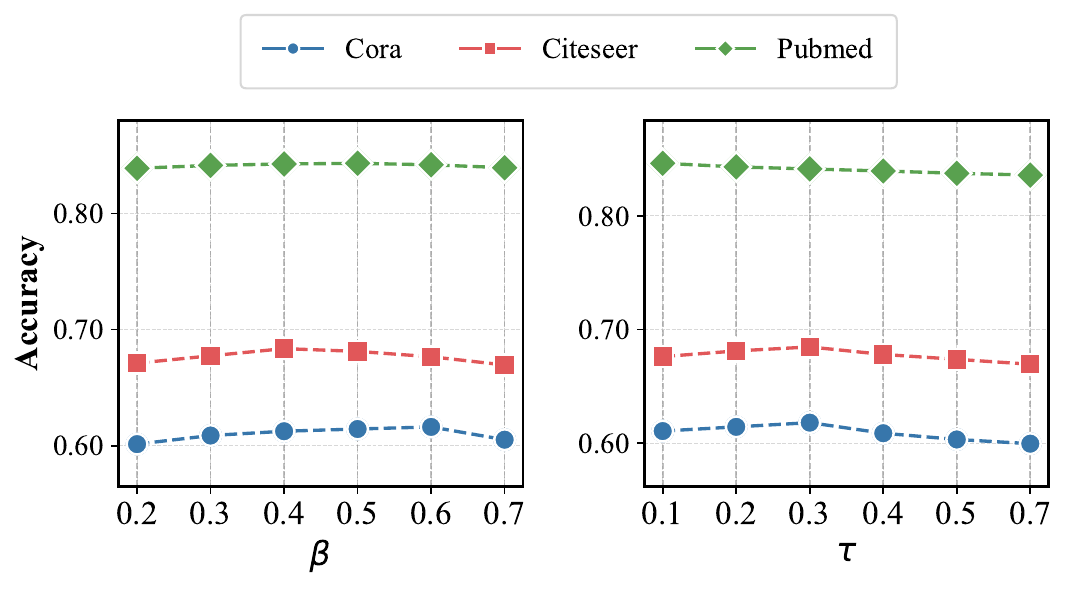}
    \caption{Sensitivity of GraspLLM to the trade-off coefficient $\beta$ and the stopping threshold $\tau$, evaluated on three citation datasets under zero-shot node classification.}
    \label{fig:param}
    \vspace{-1mm}
\end{figure}
We further study how GraspLLM responds to the two parameters introduced in subgraph sampling: $\beta$, which balances structural coherence against center relevance, and $\tau$, which determines when the greedy expansion terminates (Fig.~\ref{fig:param}). 
Varying $\beta$ in $[0.2, 0.7]$, performance is stable across the studied range and peaks at moderate values around $\beta\!\in\![0.4, 0.6]$, while it slightly drops at both endpoints, confirming that relying excessively on either signal yields suboptimal subgraphs. 
For $\tau$, larger values restrict expansion to a few highly relevant neighbors and discard long-range structural cues, leading to a clear performance decline at $\tau\!=\!0.7$. Meanwhile, smaller thresholds $\tau$ consistently work best across all three datasets.

\subsection{Ablation Study (RQ4)}

To validate the key components of our framework, we conduct ablation studies by masking them individually. 
The variant ``w/o Structural Extractor'' removes the structural information extractor (Section~\ref{sect:structural information}) and uses a randomly initialized GNN; and ``w/o Alignment Projector'' replaces our alignment projector (Section~\ref{sect:alignment tuning}) with a simple linear projector. 
The results in Table~\ref{tab:ablations} demonstrate the effectiveness of each component. 
The most significant performance decline arises from ``w/o Structural Extractor'', with consistent drops across all datasets and reductions of up to 0.14 on node classification (e.g., WikiCS) and 0.13 on link prediction (e.g., Instagram).
This highlights that the node representations generated by the GNN contain rich structural information, verifying the domain-invariant structural information extraction ability of the model. 
The ``w/o Alignment Projector'' also causes a clear decline across all datasets, indicating the critical role of our alignment strategy in enabling the model to learn more generalized feature and structural knowledge.
Therefore, GraspLLM indeed benefits from these two components.

\subsection{Supervised Results}

In this section, we investigate the supervised performance of GraspLLM (Table~\ref{tab:nc-supervised-little}). For most datasets, we follow the same configurations as in the zero-shot instruction tuning phase. For datasets with limited size such as Cora and Citeseer, all methods are tuned for $6$ epochs while other settings remain unchanged.
GraphGPT and LLaGA demonstrate strong performance in supervised scenarios but degrade significantly in zero-shot scenarios. Conversely, TEA-GLM achieves impressive zero-shot results but falls short in supervised tasks. 
This indicates a common dilemma in prior methods: Improved zero-shot generalization often comes with diminished supervised performance. 
However, GraspLLM excels in both settings, achieving state-of-the-art supervised results while retaining strong zero-shot capabilities. This demonstrates its superior adaptability, a dual strength not observed in previous approaches.

\subsection{Robustness to Prompt Format}
\label{sect:prompt-robustness}

Since GraspLLM is trained with a fixed instruction format (Format~1, Fig.~\ref{prompt:node_classification}), a natural concern is whether its zero-shot ability is brittle to prompt rewording at inference time. To probe this, we evaluate the trained model with two alternative prompts \textit{without any further tuning}: \textit{Format~2} replaces the structural description with a more concise wording (``\textit{Given a graph $\langle$graph$\rangle$, where each node contains its textual attribute$\ldots$}''), and \textit{Format~3} adopts a markedly different narrative style (``\textit{Consider a scenario where a graph is formed by multiple nodes$\ldots$ examine the main node$\ldots$}''); Format~3 thus departs furthest from the training format.

\begin{table}[h]
    \centering
    \caption{Zero-shot node classification accuracy under different prompt formats at inference time (no re-training). LLaGA and TEA-GLM are listed as references with their respective original prompts.}
    \label{tab:prompt_robustness_results}
    \scalebox{1.0}{
    \begin{tabular}{c|ccccc}
    \toprule
    \textbf{Method} & \textbf{Cora} & \textbf{Pubmed} & \textbf{Photo} & \textbf{WikiCS} & \textbf{Instagram} \\
    \midrule
    LLaGA   & 0.348 & 0.793 & 0.392 & 0.710 & 0.479 \\
    TEA-GLM & 0.379 & 0.848 & 0.497 & 0.691 & 0.508 \\
    \midrule
    \multicolumn{6}{l}{\cellcolor{oursbg}\textit{\textbf{GraspLLM (Ours)}}} \\
    \midrule
    \textbf{Format~1} & 0.614 & 0.869 & 0.554 & 0.855 & 0.571 \\
    \textbf{Format~2} & 0.589 & 0.774 & 0.529 & 0.813 & 0.523 \\
    \textbf{Format~3} & 0.557 & 0.752 & 0.408 & 0.769 & 0.466 \\
    \bottomrule
    \end{tabular}
    }
    \vspace{-1mm}
\end{table}

As shown in Table~\ref{tab:prompt_robustness_results}, accuracy unsurprisingly drops as the inference-time format diverges from the training one~\cite{Liang2023ExploringFC}, with the largest gap appearing under the most distinct Format~3. Even so, GraspLLM under the most adversarial Format~3 remains competitive with the in-domain LLaGA and TEA-GLM baselines (evaluated with their own original prompts), and on Cora and WikiCS it still leads by a clear margin, indicating that the gains brought by our structural extractor and contextual subgraph are not an artefact of any particular prompt template.

\subsection{Efficiency and Scalability Analysis}
\label{sect:efficiency-scalability}

\paragraph{Efficiency.}
\begin{figure}[t]
    \centering
    \includegraphics[width=1.0\columnwidth]{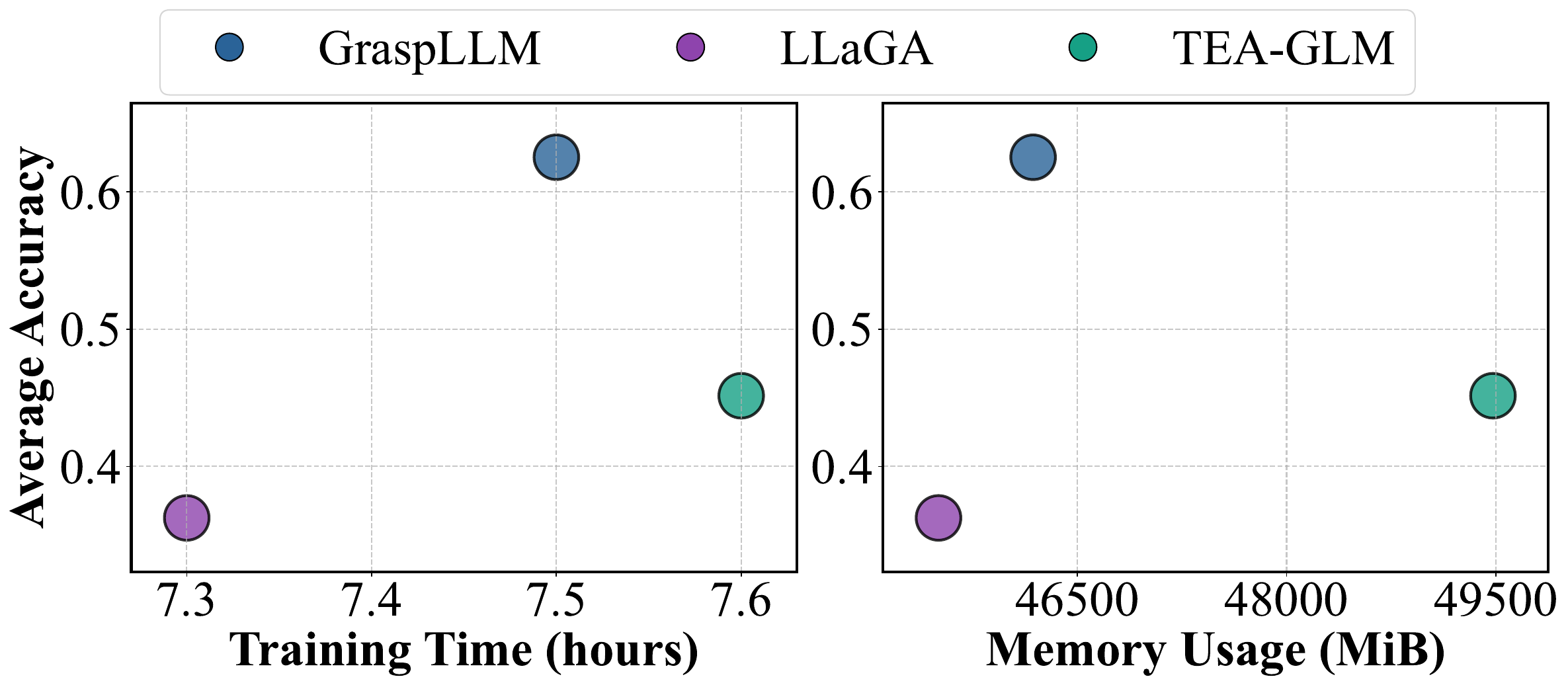}
    \caption{Efficiency analysis of GraspLLM. Each point is labeled with the average accuracy of the zero-shot node classification task across Cora, Citeseer, History and Photo.}
    \label{fig:efficient}
    \vspace{-1mm}
\end{figure}
In Fig.~\ref{fig:efficient}, we analyze the efficiency of GraspLLM, including the training time and memory consumption on the Arxiv dataset together with the zero-shot node classification performance. We observe that the time and memory overhead of our framework is comparable to those of the baseline methods (LLaGA and TEA-GLM), while achieving superior performance.

\paragraph{Scalability.}

\begin{table}[t]
    \centering
    \caption{End-to-end cost of GraspLLM on a large TAG (OGBN-Products~\cite{OGB}, $\sim$2.45M nodes / $\sim$62M edges) under single-GPU and 8-GPU data-parallel settings. The 8-GPU subgraph-sampling row corresponds to the best configuration in Table~\ref{tab:engineering-ablation}, and $\dagger$~denotes \textit{owner / peer} GPU memory under CUDA-IPC shared embedding.}
    \label{tab:scalability}
    \resizebox{\columnwidth}{!}{
    \begin{tabular}{lcc}
    \toprule
    \textbf{Action} & \textbf{Time} & \textbf{Memory$\dagger$} \\
    \midrule
    \multicolumn{3}{l}{\cellcolor{oursbg}{\textbf{\textit{Single-GPU}}}} \\
    \midrule
    GNN inference     & 0.91 s (100{,}000 nodes)   & 2{,}429 MiB  \\
    Subgraph sampling & 23.0 min (100{,}000 nodes) & 25.2 GB \\
    LLM tuning        & 14.1 h                     & 47.4 GB \\
    LLM inference     & 112 min (10{,}000 nodes)   & 20.1 GB \\
    \midrule
    \multicolumn{3}{l}{\cellcolor{oursbg}{\textbf{\textit{8-GPU}}}} \\
    \midrule
    Subgraph sampling & 7.8 min (100{,}000 nodes)  & 33.9 / 9.8 GB$^{\dagger}$ \\
    LLM tuning        & 2.4 h                      & 49.0 GB / GPU \\
    \bottomrule
    \end{tabular}
    }
    \vspace{-1mm}
\end{table}

To assess the scalability of GraspLLM on large TAGs, we evaluate on OGBN-Products~\cite{OGB}, the largest publicly available TAG benchmark, with approximately $2.45$M nodes and $62$M edges.
As shown in Table~\ref{tab:scalability}, despite the order-of-magnitude increase in graph size relative to the fourteen benchmarks, the end-to-end time and memory cost of GraspLLM remain moderate and decrease substantially under $8$-GPU data parallelism, demonstrating that the framework is deployable beyond the standard medium-scale TAG regime.

To attribute these gains to the engineering layers introduced in Section~\ref{sect:engineering}, Table~\ref{tab:engineering-ablation} progressively enables them and reports end-to-end subgraph-sampling time on the same graph.
Starting from a CPU-side serial reference whose cost is essentially prohibitive at this scale, the GPU-CSR adjacency with batched $9$-walk kernel already brings single-GPU sampling into the practical regime ($23$ min for $100{,}000$ centers); sharding centers across $8$ GPUs more than halves the wall-clock time; the CUDA-IPC shared embedding additionally lowers the per-GPU memory of peer workers from ${\sim}26$\,GB to ${\sim}9$\,GB by removing $7\times$ redundant fp16 embedding replicas; and finally multi-center batching pushes the end-to-end time to ${\sim}7.8$ min for $100{,}000$ centers.
We emphasize that none of these layers modifies the sampling algorithm itself (Section~\ref{sect:engineering}); they merely re-express the same computation in a more GPU-friendly form, so all accuracy results in Sections~\ref{sect:in-domain & cros-domain}--\ref{sect:prompt-robustness} carry over unchanged. Moreover, our framework is orthogonal to the choice of GNN and can readily incorporate scalable GNN variants for graphs beyond the current scale.

\section{Discussions}

\subsection{Challenges and Limitations of GraspLLM}

\paragraph{Challenges}
Across the cross-domain evaluation in Table~\ref{tab:nc_cross-domain}, the residual transfer gap is highly uneven: pairs whose source and target graphs share similar topological statistics (e.g., dense, low-clustering citation graphs) transfer nearly losslessly, while pairs that bridge structurally dissimilar (e.g., from a social graph to a citation graph) can still drop substantially on individual backbones. This suggests that the gap is governed by structural distribution mismatch rather than by raw graph size or label. 
A related challenge surfaces in Table~\ref{tab:nc-supervised-little} on graphs whose neighborhood semantics correlate weakly with node labels, such as the binary-class Instagram network: in these regimes the contextual subgraph contributes less information than in label-aligned citation graphs, and the supervised margin of GraspLLM over strong baselines narrows accordingly.

\paragraph{Limitations}
Two limitations are intrinsic to the current scope of GraspLLM.
First, our study focuses exclusively on Text-Attributed Graphs and does not yet cover other graph families pervasive in real data systems, such as heterogeneous, signed, or temporally evolving graphs whose nodes and edges carry richer typing or time information beyond a single textual attribute.
Second, on the task side we restrict the evaluation to node- and edge-level zero-shot inference (node classification and link prediction), and have not assessed GraspLLM on broader graph-analytical tasks such as graph-level classification, community detection, or other cluster-level workloads.

\subsection{Opportunities for Improvement}

Beyond the limitations above, we highlight two broader opportunities that we believe are particularly worth pursuing.

\textit{\textbf{More principled graph tokenization for LLMs.}} GraspLLM currently feeds the LLM with a sampled \emph{Optimal Contextual Subgraph} that is then linearized into a token sequence through a cross-attention pooling projector. 
While this design is effective on the benchmarks studied, some information loss in the linearization step is unavoidable, and a growing body of recent work has begun to investigate this issue from different angles~\cite{yehudai2026depth, bechler2026lost}. 
A promising direction is therefore to design tokenization schemes that, in addition to compressing the subgraph, are explicitly aware of these trade-offs, so that the LLM receives a representation that is both sequence-compatible and provably less lossy than a single fixed expansion.

\textit{\textbf{From fixed graph inputs to LLM-explorable graph trajectories.}} Our current pipeline treats the graph as a static input that is sampled, tokenized, and consumed in a single LLM forward pass. As LLM base models grow stronger and agentic frameworks mature, an alternative is to view graph reasoning from the \emph{data} side: preprocess the raw graph into trajectory-style data (e.g., goal-conditioned walks, intermediate query plans) that better matches how an agentic LLM explores knowledge step by step, rather than relying on a single fixed context window. 
This turns the graph store into an active provider of LLM-friendly exploration traces, and opens a complementary path to scaling LLM-based graph analytics.

\begin{table}[t]
    \centering
    \caption{Per-layer contribution of the engineering optimizations in Section~\ref{sect:engineering}, measured end-to-end on OGBN-Products. Each row is cumulative w.r.t.~the row above. $\dagger$~For rows with CUDA-IPC, memory is reported as \textit{owner / peer} GPU.}
    \label{tab:engineering-ablation}
    \resizebox{\columnwidth}{!}{
    \begin{tabular}{lcc}
    \toprule
    \textbf{Configuration} & \textbf{Time (100k)} & \textbf{Peak Mem.$^{\dagger}$} \\
    \midrule
    Reference (CPU, serial)                & $\gg 100$\,h & --- \\
    + GPU-CSR \& batched $9$-walk kernel   & 23.0 min     & 25.2 GB \\
    + $8$-GPU center sharding              & 10.2 min     & 25.5 GB / GPU \\
    + CUDA-IPC shared embedding            & \phantom{0}9.4 min  & 29.8 / 8.8 GB \\
    + Multi-center batching ($K{=}4$)      & \textbf{\phantom{0}7.8 min} & 33.9 / 9.8 GB \\
    \bottomrule
    \end{tabular}
    }
    \vspace{-1mm}
\end{table}

\section{Conclusion}

This paper presents GraspLLM, a backbone-agnostic framework that strengthens the zero-shot generalizability of LLMs on Text-Attributed Graphs by unifying node features in a shared semantic space, distilling domain-agnostic structural priors via motif-aware self-supervised learning, and constructing an \textit{Optimal Contextual Subgraph} that is aligned to the token space of a frozen LLM. Extensive experiments on fourteen TAG benchmarks across five domains and four LLM backbones show that GraspLLM consistently achieves state-of-the-art performance under in-domain, cross-domain, and cross-task zero-shot settings while remaining competitive in supervised settings and scalable to million-node graphs.

\section*{Acknowledgment}

This work is supported by Fundamental and Interdisciplinary Disciplines Breakthrough Plan of the Ministry of Education of China (JYB2025XDXM113), National Natural Science Foundation of China (92470121, 62402016), National Key R\&D Program of China (2024YFA1014003), Zhongguancun Academy (C20250204, C20250602),  Beijing Major Science and Technology Project (Z251100008125043, Z251100008425023), and High-performance Computing Platform of Peking University.

\bibliography{citations}
\bibliographystyle{IEEEtran}

\end{document}